\newcommand{\cmark}{\ding{51}}%
\newcommand{\xmark}{\ding{55}}%
\theoremstyle{thmstyleone}%
\newtheorem{lemma}{Lemma}
\theoremstyle{thmstyletwo}%
\theoremstyle{thmstylethree}%
\newtheorem{definition}{Definition}%
\begin{document}

\title[Near OOD Detection for Vision-Language Prompt Learning with Contrastive Logit Score]{Near OOD Detection for Vision-Language Prompt Learning with Contrastive Logit Score}


\author[1]{\fnm{Myong Chol} \sur{Jung}}\email{davidmcjung@gmail.com}

\author[3]{\fnm{Joanna} \sur{Dipnall}}\email{jo.dipnall@monash.edu}
\author[3]{\fnm{Belinda} \sur{Gabbe}}\email{belinda.gabbe@monash.edu}
\author[2,1]{\fnm{He} \sur{Zhao}}\email{he.zhao@data61.csiro.au}
\equalcont{Corresponding author}


\affil[1]{\orgdiv{Faculty of Information Technology}, \orgname{Monash University}, \orgaddress{\country{Australia}}}

\affil[2]{\orgname{CSIRO's Data61}, \orgaddress{\country{Australia}}}


\affil[3]{\orgdiv{School of Public Health and Preventive Medicine}, \orgname{Monash University}, \orgaddress{\country{Australia}}}


\abstract{Prompt learning has emerged as an efficient and effective method for fine-tuning vision-language models such as CLIP. While many studies have explored generalisation abilities of these models in few-shot classification tasks and a few studies have addressed far out-of-distribution (OOD) of the models, their potential for addressing near OOD detection remains underexplored. 
Existing methods either require training from scratch, need fine-tuning, or are not designed for vision-language prompt learning.
To address this, we introduce the Contrastive Logit Score (CLS), a novel post-hoc, plug-and-play scoring function.
CLS significantly improves near OOD detection of pre-trained vision-language prompt learning methods without modifying their model architectures or requiring retraining. Our method achieves up to an 11.67\% improvement in AUROC for near OOD detection with minimal computational overhead. Extensive evaluations validate the effectiveness, efficiency, and generalisability of our approach. Our codes are available at \url{https://anonymous.4open.science/r/near OOD-prompt-learning-25D1}.}

\keywords{Near OOD, Vision-language Model, Prompt Learning}



\maketitle

\section{Introduction}
Pre-trained vision-language models such as ALIGN~\citep{jia2021scaling} and CLIP~\citep{radford2021learning} have shown remarkable visual-text understanding by learning to align image features and textual features of a large-scale image-text dataset via contrastive learning. Consequently, CLIP naturally excels at zero-shot image classification, utilising a class name as natural language text instead of arbitrarily numbered category. For instance, cosine similarity between encoded image feature of an image and encoded textual feature of ``a photo of a [CLASS].'' with can be used as the classification logit of the ``[CLASS]''. The resulting logits for different class names are then converted into probabilities using the softmax function, enabling classification without requiring an additional classification head.

While CLIP’s zero-shot capabilities are impressive, recent studies have highlighted its sensitivity to prompt wording. For instance, \citet{zhou2022learning} demonstrated that small variations in prompt structure (e.g., ``a photo of a [CLASS]'' vs. ``a photo of [CLASS]'') can lead to significant decrease in accuracy, which sometimes exceeds 5\% on standard benchmarks like Caltech101~\citep{li2004learning}. This observation has led to an emerging research direction of prompt learning for few-shot classification with vision-language models~\citep{zhou2022learning,zhou2022conditional,yao2023visual,zhu2023prompt,khattak2023maple,khattak2023self}, which optimises continuous context vectors in the word-embedding space to eliminate the need of handcrafting prompts.

Although existing methods have shown success in this area, the majority focus primarily on enhancing classification accuracy, often overlooking the equally critical challenge of out-of-distribution (OOD) detection~\citep{yang2024generalized}. OOD detection is crucial for real-world, especially in safety-critical applications such as autonomous driving, healthcare, and industrial automation, where models must perform reliably under unfamiliar or unexpected conditions. In these applications, excelling in in-distribution (ID) classification alone is not enough. Models must also be capable of detecting and effectively handling OOD samples. In this paper, we particularly focus on the task of \textit{near OOD detection}~\citep{yang2023full,yang2022openood,zhang2023openood,fort2021exploring,ren2021simple,winkens2020contrastive}, where near OOD datasets only have semantic shift (e.g., both ID and near OOD samples are flower images with no overlapping classes), while far OOD datasets further contain obvious covariate or domain shift (e.g., images of dogs as an ID dataset and images of digits as an OOD dataset)~\citep{yang2024generalized}.

\begin{table*}[!t]
    \centering
    \caption{Comparison between related methods.}
    \resizebox{0.9\textwidth}{!}{\centering
    \begin{tabular}{cc>{\centering\arraybackslash}m{5cm}>{\centering\arraybackslash}m{5cm}}
        \toprule
         &  Training-Free & Tailored for CLIP prompt learning & Applicable to any CLIP prompt learning method \\
         \midrule
         Mahalanobis~\citep{lee2018simple} & \cmark & \xmark & \xmark  \\
         Energy~\citep{liu2020energy} & \cmark & \xmark & \cmark \\
         MaxLogit~\citep{hendrycks2022scaling} & \cmark & \xmark  & \cmark  \\
         CLIPN~\citep{wang2023clipn} & \xmark & \xmark & \xmark \\
         LoCoOp~\citep{miyai2023locoop} & \xmark & \cmark & \xmark\\
         NegLabel~\citep{jiang2024negative} & \cmark & \xmark & \xmark\\
         NegPrompt~\citep{Li2024learning} & \xmark & \cmark & \xmark\\
         CLS (Ours) & \cmark & \cmark & \cmark\\
         \bottomrule
    \end{tabular}}
    \label{tab:checklist}
\end{table*}

As an emerging research direction, several recent works have explored OOD detection in the context of vision-language prompt learning~\citep{Li2024learning,miyai2023locoop}. However, most existing methods are single-purpose approaches that require fine-tuning, limiting their flexibility and applicability. In contrast, we focus on scoring functions that can be directly computed from the output logits of any prompt learning method without requiring training or retraining. Our goal is to develop a plug-and-play solution that effectively distinguishes between ID and OOD samples.

Several existing OOD scoring functions have been proposed for the general OOD detection problem~\citep{hendrycks2022scaling,liu2020energy,lee2018simple}. While these methods can be applied in a post-hoc manner, they are not specifically designed for near OOD detection or vision-language models like CLIP. As a result, the overlapping score distributions between ID and OOD samples in these models often lead to poor detection performance. 

To bridge the gap, we propose a novel scoring function, namely Contrastive Logit Score (CLS), specifically designed for vision-language models. Rooted in a hypothesis testing perspective, CLS leverages the contrastive nature of CLIP to effectively distinguish ID and OOD samples. CLS enhances the separation between ID and near OOD samples, leading to substantial performance improvements. Notably, our method requires no modifications to the model architecture and does not involve retraining, making it a highly efficient and adaptable post-hoc solution for vision-language models. The unique advantage of our method can be reflected in Table~\ref{tab:checklist} by comparing with closely related works. 

We validate our method across 13 diverse datasets and 8 state-of-the-art prompt learning models. Our experiments show that our framework improves near OOD detection performance by up to 11.67\% in terms of AUROC, without affecting the classification accuracy of the underlying models. This demonstrates the versatility and effectiveness of our approach in real-world applications.

\section{Background}
\subsection{Contrastive Language–Image Pre-training (CLIP)}
Contrastive Language–Image Pre-training (CLIP)~\citep{radford2021learning} is a widely adopted dual-encoder vision–language model trained on 400 million image–text pairs and is well known for its strong zero-shot classification capabilities. Although we use CLIP as the primary backbone in our study due to its established benchmark status and broad adoption, our framework is not restricted to CLIP. The proposed method is readily applicable to other dual-encoder vision–language models that share a similar embedding structure, including ALIGN~\citep{jia2021scaling}, LiT~\citep{zhai2022lit}, OpenCLIP~\citep{Cherti_2023_CVPR}, EVA-CLIP~\citep{sun2023eva}, and SigLIP~\citep{Zhai_2023_ICCV}. These models all produce aligned visual and textual representations, allowing seamless integration with our approach without modification to the underlying architecture. Given the widespread use of CLIP and its successors across downstream tasks such as zero-shot and few-shot classification~\citep{radford2021learning,zhou2022learning}, object detection~\citep{gu2021open}, image segmentation~\citep{Xu_2022_CVPR}, image generation and editing~\citep{ramesh2022hierarchical,Patashnik_2021_ICCV}, retrieval, captioning~\citep{yu2022coca}, and multimodal reasoning~\citep{pmlr-v202-li23q}, compatibility with this family of models underscores the generality and practical applicability of our framework.

Building on this foundation, we briefly review how CLIP performs classification. CLIP measures cosine similarity between image feature of an unseen image and textual feature of a text prompt formatted as ``a photo of a [CLASS]'' where [CLASS] is the name of a class in a label space of interest. Formally, given an image $I\in \mathbb{R}^{H\times W \times 3}$ with $H$ being the height and $W$ being the width and a text prompt $T=\text{``a photo of a [CLASS]''}$, a classification logit is computed by $\langle\text{Enc}_{\text{I}}(I),\text{Enc}_{\text{T}}(T)\rangle$ where $\langle\cdot,\cdot\rangle$ is cosine similarity, $\text{Enc}_{\text{I}}(\cdot)$ is an image encoder, and $\text{Enc}_{\text{T}}(\cdot)$ is a text encoder. The image encoder can be either ResNet~\citep{he2016deep} or Vision Transformer (ViT)~\citep{dosovitskiy2021an}, and the text encoder is Transformer~\citep{vaswani2017attention}. For the brevity of notation, we omit the notations of the encoders for the remainder of the paper.

\subsection{Prompt Learning of CLIP} Prompt learning of CLIP was first introduced by \citet{zhou2022learning} through Context Optimization (CoOp), which adapts popular prompt learning techniques from the natural language processing (NLP) field to CLIP. It addresses the issue of CLIP's classification being sensitive to the prompt's prefix  (e.g., a large performance gap between when using ``a photo of a [CLASS]'' and ``a [CLASS]'') by optimising the prefix with few-shot samples. CoOp learns $M$ continuous context vectors $V=\{V_1, V_2, \cdots,V_M\}$ within word-embedding space where $V_i \in \mathbb{R}^D$ is the $i^{th}$ vector with $D$ being the word-embedding dimension. The learnable prompt is formalised as $P=\{V_1, V_2, \cdots, V_M, C\}$ where $C\in \mathbb{R}^D$ is the word-embedding of a class name appended to the context vectors. The classification logit of $i^{th}$ class is then computed by $\langle I, P_i\rangle$ where $P_i$ is the learnable prompt with the $i^{th}$ class name. The probability is estimated by the softmax function as $p(y=i\vert I,P_i)=\frac{\exp{(\langle I, P_i\rangle/\tau)}}{\sum_{k=1}^K{\exp{(\langle I,P_k\rangle/\tau)}}}$
where $K$ is the total number of classes and $\tau$ is the temperature scale. Cross-entropy loss is then minimised to learn the context vectors. Note that the only learnable parameters that are common among different prompt learning models are the $M$ context vectors. Since the introduction of CoOp, a number of subsequent works have aimed to improve its ID accuracy and generalisability with modifications in model architecture or additional loss terms (See Section~\ref{sec-rw}). The aim of the paper is to develop a post-hoc approach that improves near OOD detection performance while being agnostic to base prompt learning models.

\subsection{Near Out-Of-Distribution Detection}  

OOD detection is largely categorised as far OOD detection and near OOD detection based on the distribution shift between an ID test dataset and an OOD dataset along with difficulty of detection~\citep{ren2021simple,fort2021exploring,yang2021generalized,yang2022openood,zhang2023openood}. Far OOD datasets have covariate shift in images (i.e., OOD samples are from domains that differ from the training set), and near OOD datasets which are more challenging to detect involve semantic shift (i.e., OOD samples are drawn from the same domain as the training set but belong to previously unseen label classes). Near OOD detection is also synonymous with fine-grained OOD detection~\citep{zhang2023mixture} and hard OOD detection~\citep{Li2024learning,ming2022delving}.

In this paper, we study near OOD detection via prompt learning of CLIP, a new research problem to which no existing methods are tailored. Given a trained CLIP prompt learning method, we focus on post-hoc approaches that compute a score from the logits of the method to determine whether a given image is from ID or from OOD, which can be written as:
\begin{equation}
    g\left(I;\{P_i\}_{i=1}^K,\alpha\right)=\begin{cases}
    1 & S\left(I;\{P_i\}_{i=1}^K\right) \geq \alpha \\ %
    0 & S\left(I;\{P_i\}_{i=1}^K\right)< \alpha
    \end{cases}
\end{equation}
where $g(\cdot)$ is a OOD detector, $\alpha$ is the threshold, and $S(\cdot)$ is a score function. By convention, the ground truth label is 1 for ID samples and 0 for OOD samples. 

\section{Method}

\subsection{Problem Setting}
We focus on a near OOD detection problem for prompt learning models of CLIP, which is to detect whether a given image $I_{\text{test}}$ is from the ID test dataset $\mathcal{D}^{\text{ID}}_{\text{test}}$ of $(I^{\text{ID}}_{\text{test}},y^{\text{ID}})$ pairs or a near OOD dataset $\mathcal{D}^{\text{nearOOD}}_{\text{test}}$ of $(I^{\text{nearOOD}}_{\text{test}},y^{\text{nearOOD}})$ pairs where $y^{\text{ID}}\in\{1,\cdots,K\}$ is the ID label with $K$ classes and $y^{\text{nearOOD}}\in\{1,\cdots,L\}$ is the near OOD label with $L$ classes. The ID dataset and the near OOD dataset contain the same types of images (i.e., no covariate shift in images) but have no overlapping classes (i.e., $y^{\text{ID}}\cap y^{\text{nearOOD}}=\varnothing$). Without loss of generality, we assume that the context vectors $V$ have already been fine-tuned using a prompt learning model with a few-shot ID training dataset and only consider post-training stage in a post-hoc manner.

\subsection{Theoretical Motivations}
We first provide the theoretical motivations of our method  by formulating the OOD detection problem as a hypothesis test with two competing hypotheses. The null hypothesis  $H_{\text{ID}}$ is that the data $x$ is drawn from the in-domain distribution modelled by density $p_{\text{ID}}(x)$ while the alternative hypothesis $H_{\text{OOD}}$ is that the data $x$ is drawn from the near OOD distribution modelled by density $p_{\text{OOD}}(x)$. Our goal is to design a test to classify $x$ as an ID or OOD sample. A commonly used approach is the log likelihood ratio test~\citep{buse1982likelihood}, which is:
\begin{align}
    \lambda(x) = \log \frac{p_{\text{ID}}(x)}{p_{\text{OOD}}(x)} \label{eq-test}
\end{align}
where $x$ is considered as an ID sample when $\lambda(x)$ is larger than a threshold $\gamma$. According to the Neyman–Pearson Lemma~\citep{neyman1933ix}, the likelihood ratio test is the most powerful test for a given Type I error rate, defined as $p\left(\lambda(x) > \gamma \hspace{0.3em} \vert \hspace{0.3em} H_{\text{OOD}}\right)$ (i.e., classifying $x$ as ID when it is actually OOD).

In practice, $p_{\text{ID}}(x)$ and $p_{\text{OOD}}(x)$ are not explicitly known. Inspired by classification with rejection~\citep{chow1970optimum,cortes2016learning,ni2019calibration,charoenphakdee2021classification,hendrickx2024machine} and open set recognition~\citep{rudd2017extreme,geng2020recent,Palechor2023large}, we introduce an additional ``null'' class denoted as $y=0$ to the original multi-class classification setting to take account for samples that do not belong to any of the known in-domain classes. The new ID classes can be rewritten as $y^{\text{ID}*}\in \{0, 1,\dots, K\}$.

\begin{definition}
(Chow’s rule~\citep{chow1970optimum}) Denote $i^* =\arg\max_{i \in \{1, \dots, L\}} p(y=i | x)$ and
the optimal solution of multiclass classification with rejection can be expressed as:
\begin{align}
    y = \begin{cases} 
  0, & p(y=i^*|x) \le 1 - c \\
  i^*, & \text{otherwise}.
\end{cases}
\end{align}
where $c$ is a confidence threshold that controls the trade-off between classification accuracy and rejection rate.
\end{definition}

Chow’s rule suggests that the confidence defined as the maximum probability over the classes can be used as a score function to reject the sample $x$ as an OOD sample:
\begin{align}
p_{\text{ID}}(x) \propto  \max_{i \in \{1, \dots, K\}} p(y=i | x). \label{eq-max}
\end{align}

From the hypothesis-testing perspective, distinguishing ID from OOD requires comparing the likelihood under the in-distribution model to that under an alternative distribution. Since $p_{\text{OOD}}$ is inherently unknown in practice, the classical likelihood-ratio test becomes intractable without an additional modeling assumption. It is common in open set recognition~\citep{rudd2017extreme,geng2020recent} to introduce a ``null'' or ``unknown'' class providing a principled surrogate for the unknown OOD likelihood:
\begin{align}
    p_{\text{OOD}}(x) \approx p(y=0| x).
\end{align}


\subsection{Proposed Score Function}

Intuitively inspired by the log likelihood ratio test, we propose a new score function tailored for CLIP-based prompt learning methods. We emphasise that the hypothesis-testing framing serves as conceptual motivation rather than an exact probabilistic model and our proposed method is designed as a post-hoc score leveraging CLIP’s contrastive structure.

Recall that the predictive probability of CLIP prompt learning by design is:
\begin{align}
p(y=i\vert I,P_i) \propto \exp{(\langle I, P_i\rangle)},    
\end{align}
where the temperature $\tau$ is omitted to assist clarity, $I$ denotes the image representation, and $P_i$ is the prompt representation including the class name.
By Eq.~\eqref{eq-max}, we then have:
\begin{align}
p_{\text{ID}}(x) \propto \max_{i=1}^K \exp{(\langle I, P_i\rangle)},    
\end{align}

Importantly for OOD, we propose:
\begin{align}
p_{\text{OOD}}(x) \approx p(y=0| x) \propto \exp{(\langle I, V\rangle)},
\end{align}
where $\langle I, V\rangle$ represents the cosine similarity between the image feature and the textual feature derived from the fine-tuned context vectors $V$ \emph{without} any class names. Intuitively, the fine-tuned $V$ encodes general ID characteristics without associating them with specific classes, and $\langle I,V\rangle$ quantifies how well an image aligns with these generic, non-class-specific ID features. In this sense, it serves as an indicator of the image’s closeness to the ID distribution. Figure~\ref{fig:dist context} illustrates this relationship: near OOD samples typically exhibit higher $\langle I,V\rangle$ values, whereas far OOD samples yield lower values.
By introducing the null-class vector $V$ and combining it with the max-logit term from the known classes, we obtain a simple yet effective way to perform post-hoc OOD detection within a vision-language prompt learning framework.

Motivated by the log likelihood ratio test in Eq.~\eqref{eq-test}, we derive the following score function:
\begin{align}
    \lambda(x) &= \log \frac{p_{\text{ID}}(x)}{p_{\text{OOD}}(x)} \\
    &= \log p_{\text{ID}}(x) - \log p_{\text{OOD}}(x) \\& =
    \max_{i=1}^K \langle I, P_i\rangle - \langle I, V\rangle.
\end{align}

Although the likelihood-ratio test compares $\log p_{\text{ID}}(x)$ with $\log p_{\text{OOD}}(x)$, in practice both terms must be approximated using surrogate quantities derived from different sources. These surrogates generally do not share a common calibration: their magnitudes, variances, and dynamic ranges may differ, and the surrogate OOD likelihood often absorbs components of epistemic uncertainty rather than pure distributional uncertainty~\citep{Gustafsson2020evaluating,senge2014reliable,malinin2018predictive,nandy2020towards}. As a result, the raw difference between these surrogate scores does not reliably approximate a true log-likelihood ratio. To obtain a meaningful decision statistic, it is therefore necessary to adjust the relative scale of the ID and null evidence, ensuring that their contributions to the test reflect their intended probabilistic roles and enabling more faithful separation between the two hypotheses. 

To address this, 
%
%
%
we introduce a scaling factor $\beta$, that de-correlates the ID and null evidence, ensuring that the resulting discriminant direction best separates ID and OOD samples. With $\beta$,
%
%
%
we derive a new score function called \emph{Contrastive Logit Score (CLS)} defined as:
\begin{align}
    \text{CLS}(x) 
   = \max_{i=1}^K \langle I, P_i\rangle - \beta \langle I, V\rangle.
   \label{eq:CLS}
\end{align}

When $\beta=0$, the proposed CLS score simplifies to the MaxLogit score~\citep{hendrycks2022scaling}. Figure~\ref{fig:explanation diagram} compares the CLS and MaxLogit scores and highlights the contribution of $\langle I,V\rangle$ to improved near OOD detection. While the MaxLogit score effectively separates typical ID samples from far OOD samples based on prediction confidence, it struggles to distinguish between atypical ID samples (i.e., hard-to-classify or rare instances) and near OOD samples. The term $\langle I,V\rangle$, which reflects the presence of generic ID features, provides complementary information to interpret prediction uncertainty. Atypical ID samples often lack distinctive ID features, whereas near OOD samples may exhibit strong ID feature presence. By incorporating $\langle I,V\rangle$ into the CLS formulation (i.e., subtracting the MaxLogit score from it), the two factors become more separable, enhancing discrimination between atypical ID and near OOD samples.

\begin{figure}[!t]
    \centering
    \includegraphics[width=0.45\textwidth]{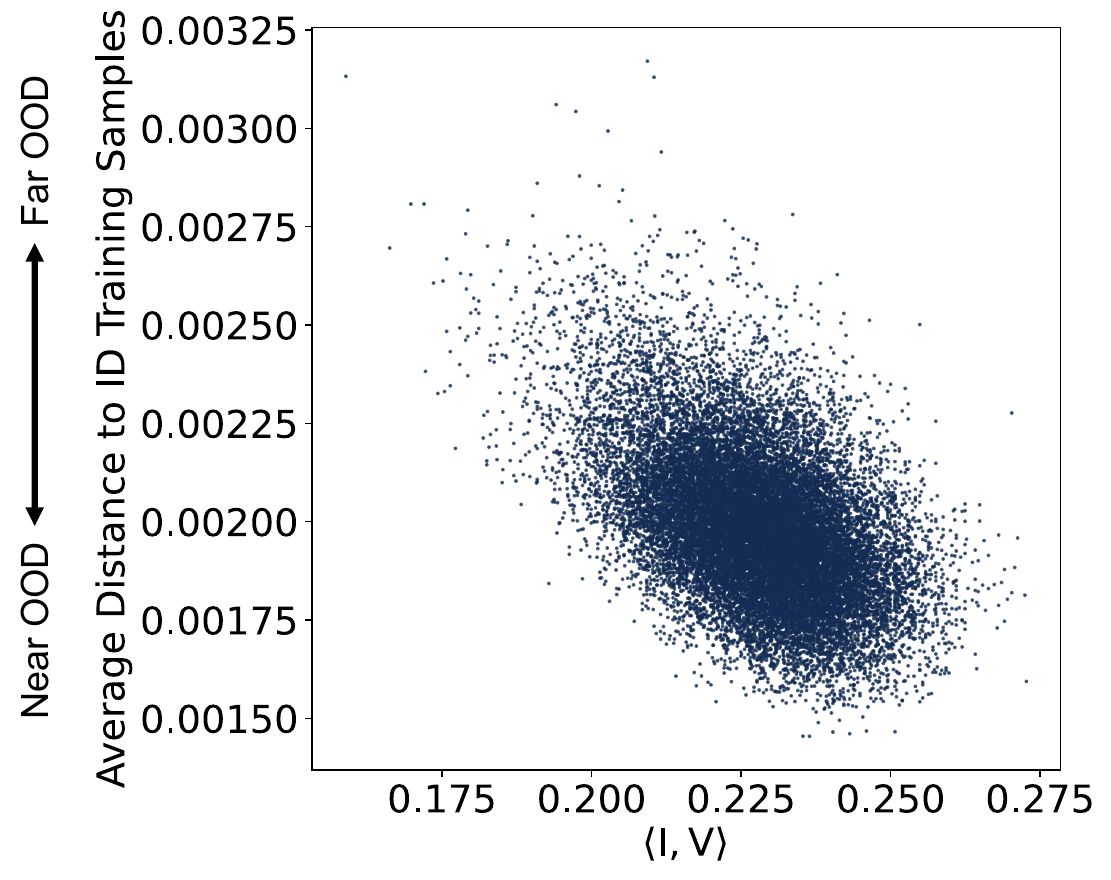}
    \caption{Average Euclidean distance of OOD samples to the ID training samples of ImageNet~\citep{deng2009imagenet} measured with image embeddings of PromptSRC~\citep{khattak2023self} trained with 16 shots.}
    \label{fig:dist context}
\end{figure}
\begin{figure*}[!t]
    \centering
    \includegraphics[width=\textwidth]{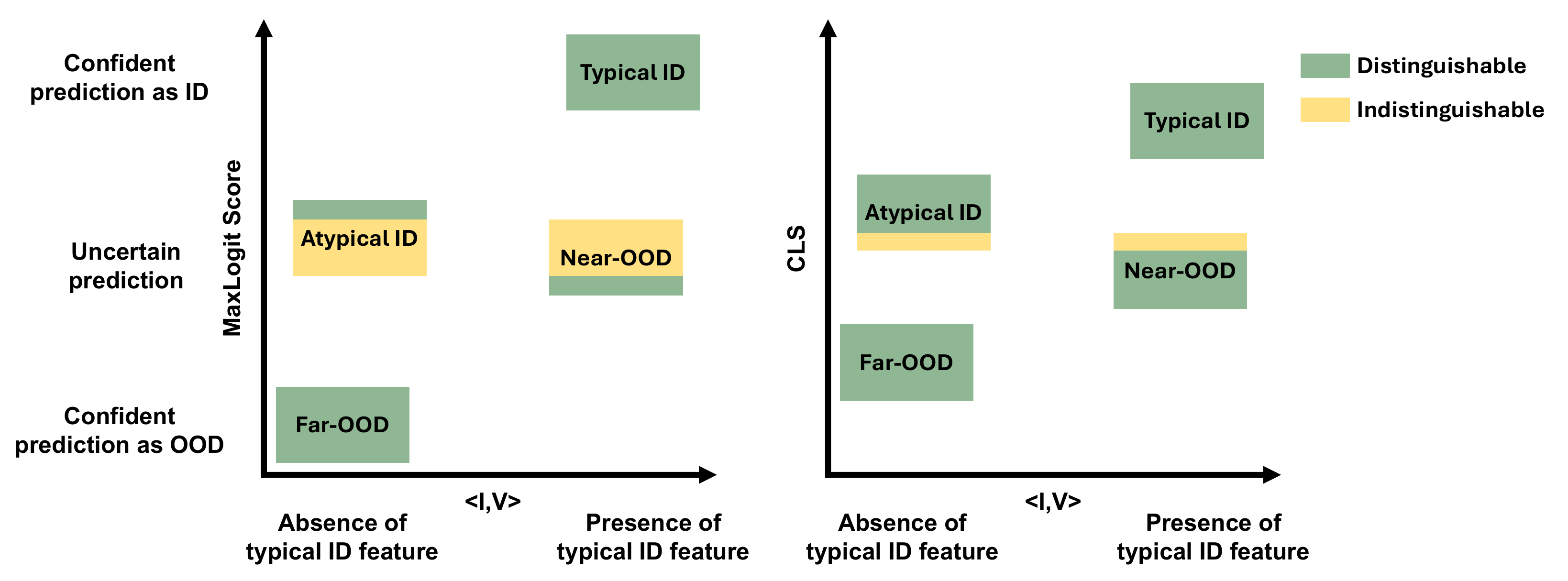}
    \caption{Comparison of CLS and MaxLogit scores. While MaxLogit separates typical ID and far OOD samples, it struggles with atypical ID and near OOD samples. Incorporating $\langle I,V\rangle$, which captures generic ID features, improves separability and enhances near OOD detection.}
    \label{fig:explanation diagram}
\end{figure*}
Figure~\ref{fig:calibration} further demonstrates the effectiveness of CLS compared to the MaxLogit score, where CLS values are plotted on the y-axis and $\langle I,V\rangle$ on the x-axis. Geometrically, subtracting $\langle I,V\rangle$ from the MaxLogit score corresponds to a vertical shearing transformation~\citep{lax2007linear} applied to the score distribution, effectively reducing the overlapping region (highlighted by shaded areas). We observe a consistent performance improvement, measured by AUROC, when $\beta \neq 0$, confirming the advantage of CLS over the MaxLogit score in detecting near OOD samples.

\begin{figure*}[!t]
    \centering
    \begin{subfigure}[c]{0.3\textwidth}
        \centering
        \includegraphics[height=4.3cm]{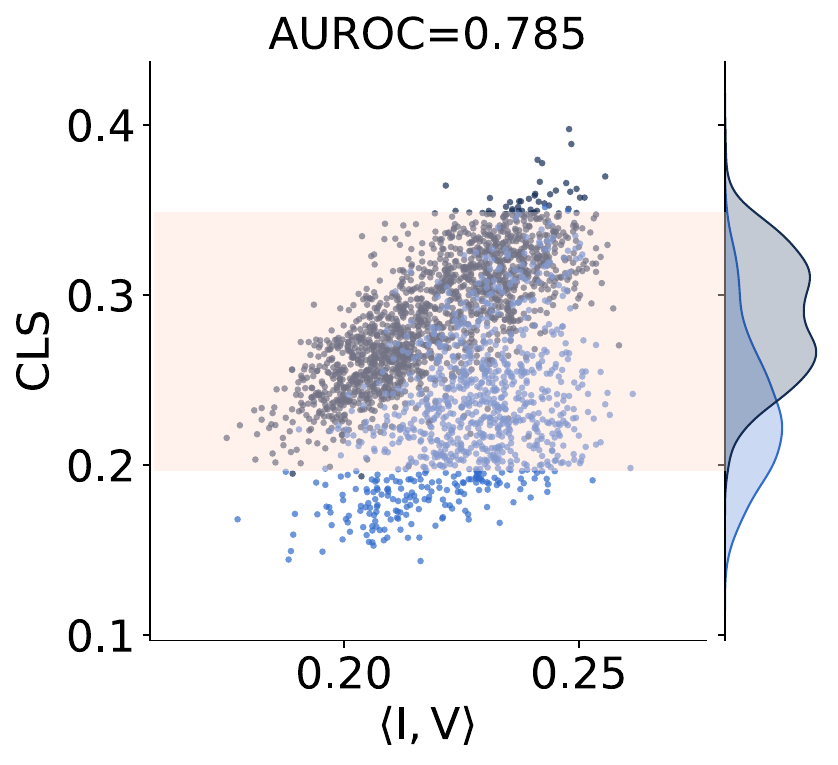}  
        \caption{$\beta=0$}
        \label{fig:maxlogit 2d}
    \end{subfigure}
    \hfill
    \begin{subfigure}[c]{0.3\textwidth}
        \centering
        \includegraphics[height=4.3cm]{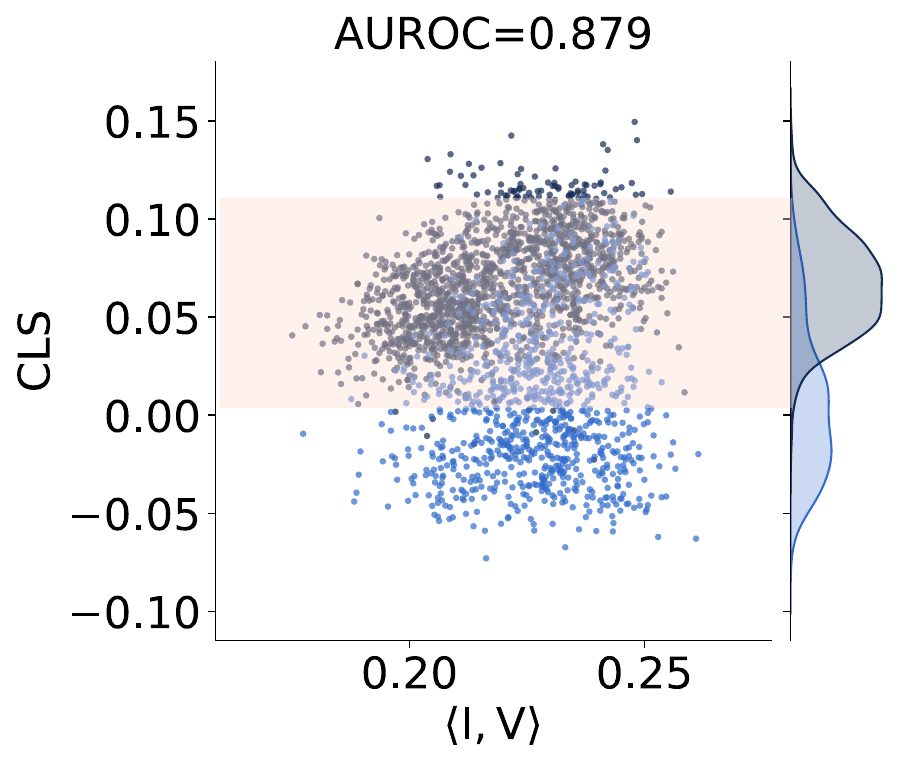}  
        \caption{$\beta=1$}
        \label{fig:CLS-M without beta}
    \end{subfigure}
    \hfill
    \begin{subfigure}[c]{0.3\textwidth}
        \centering
        \includegraphics[height=4.3cm]{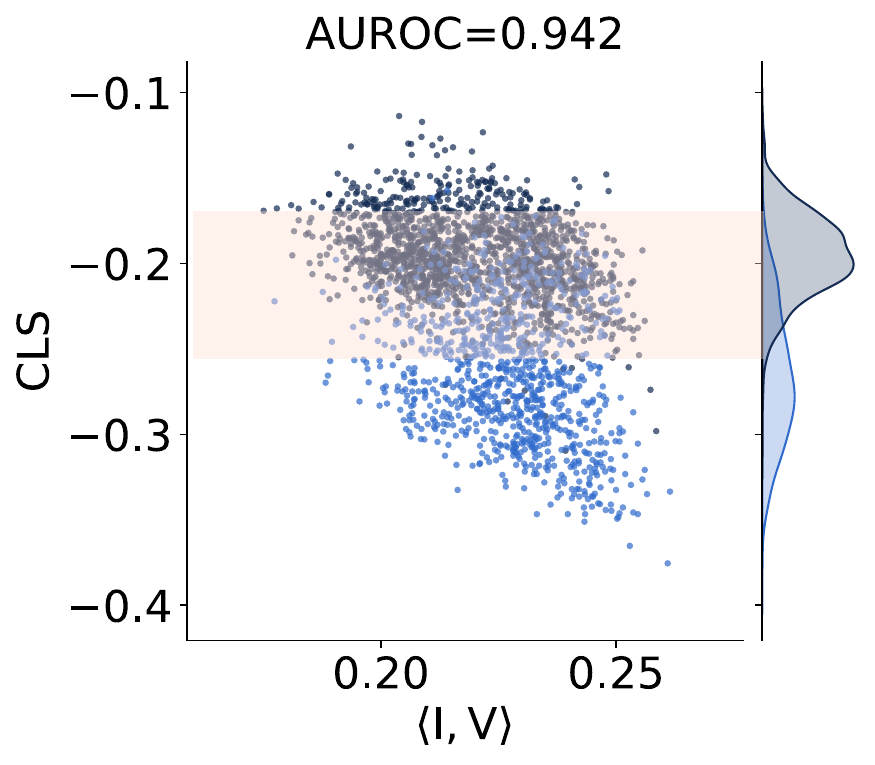}  
        \caption{$\beta=2.2$}
        \label{fig:CLS-M with beta}
    \end{subfigure}
    \begin{subfigure}[c]{0.08\textwidth}
        \centering
        \includegraphics[height=3.1cm]{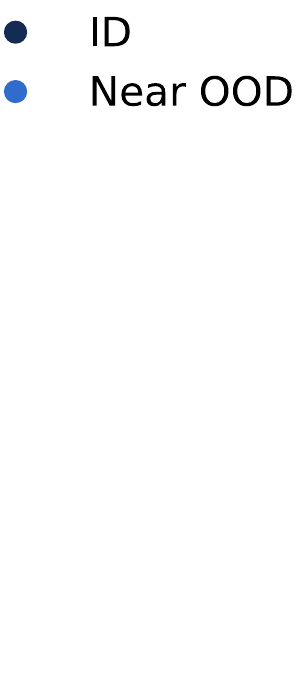}  
    \end{subfigure}
    \caption{(a) MaxLogit score (i.e., CLS with $\beta=0$), (b) CLS with $\beta=1$, and (c) CLS with $\beta=2.2$ of test ID and near OOD samples with respect to Context scores. Areas where ID samples and near OOD samples overlap are highlighted with shaded boxes. All scores are computed using MaPLe~\citep{khattak2023maple} on Caltech101~\citep{li2004learning}.}
    \label{fig:calibration}
\end{figure*}
\begin{figure*}[!t]
    \centering
    \includegraphics[width=\textwidth]{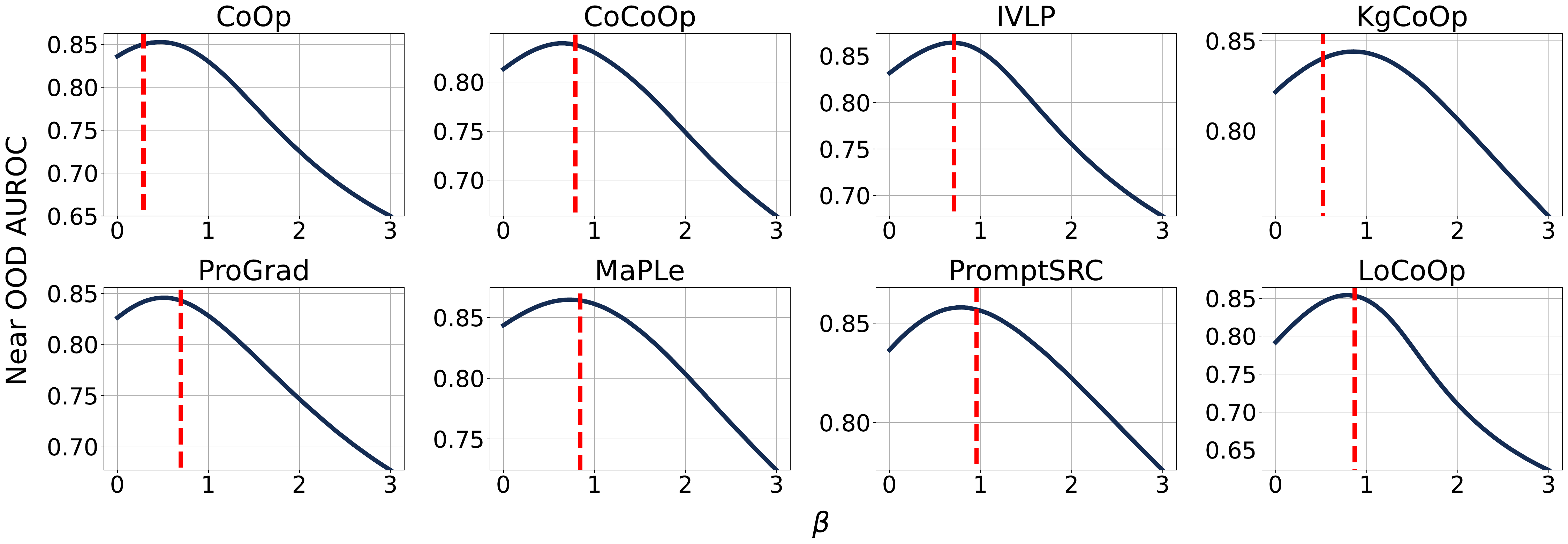}
    \caption{Near OOD detection AUROC using CLS vs. $\beta$ for CoOp~\citep{zhou2022learning}, CoCoOp~\citep{zhou2022conditional}, IVLP~\citep{khattak2023maple}, KgCoOp~\citep{yao2023visual}, ProGrad~\citep{zhu2023prompt}, MaPLe~\citep{khattak2023maple}, PromptSRC~\citep{khattak2023self}, and LoCoOp~\citep{miyai2023locoop} on 16-shots UCF101~\citep{soomro2012ucf101}. The scaling factor is approximated by Eq.(\ref{eq:scale}), shown as red dotted lines.} 
    \label{fig:scale}
\end{figure*}

\subsection{Learning Dataset-Specific Scaling Factor}

Figure~\ref{fig:calibration} shows that selecting a proper value of the scaling factor could increase the performance. 
We argue that the appropriate value of $\beta$ is inherently dataset dependent because it compensates for discrepancies between the surrogate ID and OOD likelihoods that vary across datasets. Each dataset induces different feature distributions, different levels of class concentration, and different forms of overlap between in-distribution structure and the surrogate null distribution. As a result, the raw magnitudes and variances of the surrogate log-likelihoods need not be comparable across datasets, and the degree to which the null score captures distributional uncertainty differs as well. To further demonstrate the impact of $\beta$, we show how the near OOD performance in AUROC of different models varies with different $\beta$ in Figure~\ref{fig:scale}. Notably, a prompt learning model can be highly sensitive to $\beta$, underscoring the critical need for an effective scaling strategy.

This naturally opens a question on how to set $\beta$ properly for each dataset. If we were given the OOD samples, we could simply choose the value of $\beta$ that minimises the near OOD performance. However, such an approach is impractical for real-world applications where near OOD samples are unavailable before deployment. 

To address this, we propose to estimate the margin scale by only using \emph{few-shot ID training samples}. Initially, MaxLogit score and $\langle I,V\rangle$ in Figure~\ref{fig:maxlogit 2d} exhibit positive correlation, leading to significant overlap in the density distributions of ID and near OOD samples. When the near OOD detection AUROC is maximised, as shown in Figure~\ref{fig:CLS-M with beta}, this correlation is minimised, resulting in better separation between ID and near OOD distributions. Thus, we formulate this problem as finding the margin scale that minimises the correlation between MaxLogit and $\langle I,V\rangle$. We propose approximating this correlation using the covariance matrix of a bivariate normal distribution fitted with CLS and $\langle I,V\rangle$ computed from the training samples. A key advantage of this approach is its computational simplicity and the availability of a closed-form solution via maximum likelihood estimation. Specifically, we determine the optimal scaling factor by zeroing out the off-diagonal terms of the covariance matrix of the fitted bivariate normal distribution. This ensures that the MaxLogit score and $\langle I,V\rangle$ become uncorrelated, leading to improved near OOD detection.

\begin{lemma}
\label{lemma:bivariate}
    Given $N$ scalar observations $\{\hat{x}_i\}_{i=1}^N$ and $\{\hat{y}_i\}_{i=1}^N$, we define two variables $x=\hat{x}$ and $y=\hat{y}-\beta \cdot \hat{x}$. The scale parameter $\beta$ that zeros out the covariance of two variables (i.e., the off-diagonals of a covariance matrix) which is approximated by maximum likelihood estimation is:
    \begin{equation}
        \beta =\frac{\sum_{i=1}^N{(\hat{x}_i-\mu_{\hat{x}})(\hat{y}_i-\mu_{\hat{y}})}}{\sum_{i=1}^N{(\hat{x}_i-\mu_{\hat{x}})^2}}
        \label{eq:scale}
    \end{equation}
    where $\mu_{\hat{x}}=\frac{1}{N}\sum_{i=1}^N\hat{x}_i$ and $\mu_{\hat{y}}=\frac{1}{N}\sum_{i=1}^N\hat{y}_i$.
\end{lemma}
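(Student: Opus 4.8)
The plan is to invoke the standard fact that the maximum-likelihood estimate of the covariance matrix of a bivariate normal distribution fitted to $N$ paired observations $\{(x_i,y_i)\}_{i=1}^N$ is the (biased) sample covariance matrix, whose off-diagonal entry is $\frac{1}{N}\sum_{i=1}^N (x_i-\bar{x})(y_i-\bar{y})$ with $\bar{x}$ and $\bar{y}$ the sample means. Under the reparametrisation $x=\hat{x}$ and $y=\hat{y}-\beta\hat{x}$, the requirement that this off-diagonal entry vanish turns into a single linear equation in $\beta$, which I would then solve in closed form.

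First I would record that the samples of the two new variables are $x_i=\hat{x}_i$ and $y_i=\hat{y}_i-\beta\hat{x}_i$, so their sample means are $\bar{x}=\mu_{\hat{x}}$ and $\bar{y}=\mu_{\hat{y}}-\beta\mu_{\hat{x}}$; hence the centred values obey $x_i-\bar{x}=\hat{x}_i-\mu_{\hat{x}}$ and $y_i-\bar{y}=(\hat{y}_i-\mu_{\hat{y}})-\beta(\hat{x}_i-\mu_{\hat{x}})$. Substituting these into the off-diagonal entry of the MLE covariance matrix and expanding gives
\begin{align}
\hat{\Sigma}_{xy}
&= \frac{1}{N}\sum_{i=1}^N (x_i-\bar{x})(y_i-\bar{y}) \\
&= \frac{1}{N}\sum_{i=1}^N (\hat{x}_i-\mu_{\hat{x}})\bigl[(\hat{y}_i-\mu_{\hat{y}})-\beta(\hat{x}_i-\mu_{\hat{x}})\bigr] \\
&= \frac{1}{N}\Bigl[\sum_{i=1}^N (\hat{x}_i-\mu_{\hat{x}})(\hat{y}_i-\mu_{\hat{y}}) - \beta\sum_{i=1}^N (\hat{x}_i-\mu_{\hat{x}})^2\Bigr].
\end{align}
Setting $\hat{\Sigma}_{xy}=0$, the prefactor $1/N$ cancels, and solving the remaining linear equation for $\beta$ yields exactly Eq.~\eqref{eq:scale}; the solution is unique and well defined whenever $\sum_i(\hat{x}_i-\mu_{\hat{x}})^2\neq 0$, i.e. the $\hat{x}_i$ are not all identical.

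I do not anticipate a genuine obstacle here. The only points worth a sentence are (i) justifying that the MLE of a Gaussian covariance matrix is the sample covariance matrix — a textbook computation obtained by differentiating the Gaussian log-likelihood with respect to the precision matrix and setting the derivative to zero — and (ii) observing that the overall normalisation is immaterial, since whether one divides by $N$ or $N-1$ the factor cancels when the off-diagonal is equated to zero. Everything else is elementary algebra, so the final write-up will be brief. If desired, one can additionally remark that, because $\hat{\Sigma}_{xy}$ is affine in $\beta$, this root also minimises $|\hat{\Sigma}_{xy}|$, which matches the ``minimise the correlation'' viewpoint used to motivate the construction.
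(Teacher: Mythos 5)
Your proposal is correct and follows essentially the same route as the paper's own proof: both invoke the standard MLE of a bivariate Gaussian covariance (the $1/N$-normalised sample covariance), substitute $x=\hat{x}$, $y=\hat{y}-\beta\hat{x}$ into the off-diagonal entry, set it to zero, and solve the resulting linear equation for $\beta$. Your added remarks on uniqueness when $\sum_i(\hat{x}_i-\mu_{\hat{x}})^2\neq 0$ and on the irrelevance of the $N$ versus $N-1$ normalisation are sensible but do not change the argument.
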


By using Lemma~\ref{lemma:bivariate} with $\hat{y}$ being the MaxLogit score and $\hat{x}$ being $\langle I,V\rangle$, $\beta$ can be easily estimated with ID training samples (see Appendix~\ref{appendix:proof} for proof of the lemma). 
Figure~\ref{fig:scale} shows the estimated scaling factor in red dotted lines, demonstrating that our estimation is very close to the empirically optimal value that maximises near OOD detection performance. In addition to good accuracy, our method is a close-form estimation that only takes a small number of ID training samples with negligible computational cost.

\subsection{Connections to Existing Score Functions}
\subsubsection{Energy Score} 
Our proposed framework can also accommodate another widely-used score function: 
Energy score~\citep{liu2020energy} defined as $\tau \log{\sum_{i=1}^K{\exp{(\langle I, P_i\rangle/\tau)}}}$, where $\tau$ is a temperature scaling parameter. It is well known that when $\tau=1$, $\max_{i=1}^K \langle I, P_i\rangle\leq \log{\sum_{i=1}^K{\exp{(\langle I, P_i\rangle)}}} \leq \max_{i=1}^K \langle I, P_i\rangle +\log{K}$. Based on this, we introduce two variants of CLS: CLS-M derived from the MaxLogit score and CLS-E derived from the Energy score:
\begin{align}
    &\text{CLS-M}=\max_{i=1}^K \langle I, P_i\rangle - \beta \langle I, V\rangle, \label{eq:CLS-M} \\
    &\text{CLS-E}=\tau \log{\sum_{i=1}^K{\exp{(\langle I, P_i\rangle/\tau)}}}- \beta \langle I, V\rangle. \label{eq:CLS-E}
\end{align}
where $\tau$ is a temperature scale. The overall algorithm is summarised in Algorithm~\ref{al:mls}.

\subsubsection{Relative Mahalanobis Distance Score} Relative Mahalanobis Distance Score (RMDS)~\citep{ren2021simple} also measures the difference of a class-specific score and a class-agnostic score. However, RMD relies on intermediate feature maps from a traditional classifier with a classification head. As a result, it cannot be applied to a CLIP prompt learning model, which fine-tunes context vectors and performs classification by measuring the cosine similarity between image and text embeddings. In contrast, CLS is specifically designed for CLIP-based prompt learning models, leveraging contrastive similarity rather than feature-space distances. This makes CLS a more natural and effective OOD detection score for vision-language models.

\begin{algorithm}[t]
\DontPrintSemicolon
 \caption{CLS Computation}
 \label{al:mls}
 \KwIn{Few-shot training dataset of $N$ image-label pairs of $\{I_i,y_i\}_{i=1}^N$ where $y_i\in\{1,\cdots,K\}$ with $K$ classes, a test image $I_{\text{test}}$, a fine-tuned prompt learning model with learned context vectors $V$.}
 \For{$I_i, y_i$}
    {Compute and store MaxLogit score or Energy score.\;
    Compute and store Context score $S_{\text{Context}}$.\;}
Estimate margin scale $\beta$ by Eq.(\ref{eq:scale})\;
Compute CLS by Eq.(\ref{eq:CLS-M}) and Eq.(\ref{eq:CLS-E}) \;
\end{algorithm}

\section{Related Work}
\label{sec-rw}
\subsection{Vision-Language Models} Vision-language models have significantly advanced in recent years, bridging the gap between visual and textual data. Early approaches, such as image captioning models~\citep{karpathy2015deep,wang2016image,you2016image}, typically used convolutional neural networks (CNNs) to extract visual features and recurrent neural networks (RNNs) to generate descriptive text. The advent of transformers~\citep{vaswani2017attention} handling long-range dependencies more effectively and contrastive learning~\citep{oord2018representation} revolutionised this field. Notably, ALIGN~\citep{jia2021scaling}, CLIP~\citep{radford2021learning}, and LiT~\citep{zhai2022lit} leveraged a contrastive learning framework that aligns image and text embeddings in a multimodal space, allowing for zero-shot learning capabilities and impressive generalisation to unseen tasks and datasets. In this work, we leverage the powerful vision-language model CLIP and extend its near OOD capability.

\subsection{CLIP-based Prompt Learning} Despite the remarkable zero-shot performance of CLIP, CLIP shows inherently unstable classification accuracy that varies by wording of prompt. To mitigate this issue, CoOp~\citep{zhou2022learning} was proposed to optimise a prompt in word embedding space, leveraging prompt learning from the NLP literature. CoCoOp~\citep{zhou2022conditional} identified that CoOp has limited generalisation and proposed to condition image features to the learnable prompt. Subsequently, many studies have proposed different techniques to improve the generalisation~\citep{yao2023visual,zhu2023prompt,khattak2023maple,khattak2023self}. While its generalisation has been largely improved, its OOD detection has been overlooked. LoCoOp~\citep{miyai2023locoop} proposed a OOD regularisation to improve OOD detection performance. Nevertheless,  no study has addressed near OOD detection of prompt learning models.

\subsection{OOD Detection} 
An early work of \citet{hendrycks2017baseline} utilised the maximum softmax probability (MSP) as a score to identify OOD samples. Another notable work is Out-of-DIstribution detector for Neural networks (ODIN)~\citep{liang2018enhancing} which extends MSP by introducing temperature scaling and input pre-processing to enhance separation of the scores from ID samples and OOD samples. Similar to ODIN, Mahalanobis~\citep{lee2018simple} score also uses input pre-processing in addition to measuring distance in feature space. Delving into a more challenging task of near OOD detection, several studies analysed benchmarks of pre-trained networks in near OOD detection~\citep{yang2023full,yang2022openood,zhang2023openood,fort2021exploring}, and different training methods and score functions were proposed for near OOD detection~\citep{ren2021simple,winkens2020contrastive}. Despite significant advancements in OOD detection for traditional classifier-based neural networks, many existing methods are not directly applicable to CLIP-based prompt learning models, which lack classifier heads. Furthermore, since these models do not update their image encoders during fine-tuning, many distance-based methods that rely on image features become ineffective.

\section{Experiments}
\label{sec:experiments}
\subsection{Experimental Settings}
\subsubsection{Datasets} Following previous works of CLIP-based prompt learning models~\citep{zhou2022conditional,khattak2023maple,yao2023visual,zhu2023prompt,khattak2023maple,khattak2023self,miyai2023locoop}, we use 11 publicly available datasets of ImageNet~\citep{deng2009imagenet}, Caltech101~\citep{li2004learning}, OxfordPets~\citep{parkhi2012cats}, StanfordCars~\citep{kraus2320133d}, Flowers102~\citep{nilsback2008automated}, Food101~\citep{bossard2014food}, FGVCAircraft~\citep{maji2013fine}, SUN397~\citep{xiao2010sun}, DTD~\citep{climpoi2014describing}, EuroSAT~\citep{helber2019eurosat}, and UCF101~\citep{soomro2012ucf101}. A common task of these works involves training models on half of the label classes (e.g., base classes) and evaluating them on the other half classes (e.g., new classes) to measure base-to-new generalisation. We reframe this task as a near OOD detection problem. Specifically, the models trained on base classes are tested with a dataset where half of the samples belong to the base classes (ID) and the other half to new classes (near OOD). The task is to detect whether each test image belongs to the ID dataset or the near OOD dataset. In addition, we include CIFAR10~\citep{krizhevsky2009learning} and CIFAR100~\citep{krizhevsky2009learning}, which are the standard near OOD detection benchmarks~\citep{ren2021simple,fort2021exploring,yang2021generalized,yang2022openood,zhang2023openood}. For CIFAR10 and CIFAR100, we use all classes and evaluate with a test dataset consisting of both CIFAR10 test samples and CIAFR100 test samples, following the literature.

\textbf{Data Availability Statement.} All data used in this work is publicly available.

\subsubsection{Base Models and Baselines}

    As a post-hoc scoring function, our method can be applied with most of vision-language few-shot prompt learning models without retraining. We select 8 popular methods as our base models, including: CoOp~\citep{zhou2022learning}, CoCoOp~\citep{zhou2022conditional}, IVLP~\citep{khattak2023maple}, KgCoOp~\citep{yao2023visual}, ProGrad~\citep{zhu2023prompt}, MaPLe~\citep{khattak2023maple}, PromptSRC~\citep{khattak2023self}, and LoCoOp~\citep{miyai2023locoop}. We follow their training details to train them with 16, 8, 4, 2, and 1-shot settings using 3 random seeds.
    
    In addition to MaxLogit~\citep{hendrycks2022scaling} and Energy Score~\citep{liu2020energy}, we also identify NegPrompt~\citep{Li2024learning} and LoCoOp (GL-MCM) ~\citep{miyai2023locoop} as the closet methods to ours, which are few-shot prompt learning models designed for OOD detection.
    
    We also compare our approach with several OOD detection methods developed for vision-language models which are not prompt learning based methods. These include MCM~\citep{ming2022delving}, NegLabel~\citep{jiang2024negative}, AdaNeg~\citep{zhang2024adaneg}, and CLIPN~\citep{wang2023clipn}.

    For references, we also report the results of widely-used OOD detection methods for a ViT-B/16 classifier fully fine-tuned with full training data instead of CLIP including ASH~\citep{djurisic2023extremely}, GradNorm~\citep{huang2021on}, KNN~\citep{pmlr-v162-sun22d}, Mahalanobis distance Score (MDS)~\citep{lee2018simple}, MSP~\citep{hendrycks2017baseline}, ODIN~\citep{liang2018enhancing}, React~\citep{sun2021react}, RMDS~\citep{ren2021simple}, VIM~\citep{Wang_2022_CVPR}. 

    For all models, we use ViT-B/16~\citep{dosovitskiy2021an} for the visual encoder and Transformer~\citep{vaswani2017attention} for the text encoder. Refer to Appendix~\ref{appendix:implementation} for implementation details.
    
\begin{table}[!t]
\centering
\caption{Near OOD AUROC ($\uparrow$) of prompt learning models averaged over 13 datasets using the MaxLogit score and CLS-M.}

\begin{tabular}{cccc}
\toprule
 & MaxLogit & CLS-M & $\triangle$ \\
\midrule
CoOp & 80.74 & 81.84 & \textcolor{blue}{+1.09} \\
CoCoOp & 81.09 & 82.74 & \textcolor{blue}{+1.65} \\
IVLP & 81.12 & 84.34 & \textcolor{blue}{+3.23} \\
KgCoOp & 80.84 & 83.12 & \textcolor{blue}{+2.28} \\
ProGrad & 79.77 & 82.35 & \textcolor{blue}{+2.58} \\
MaPLe & 81.06 & 83.94 & \textcolor{blue}{+2.88} \\
PromptSRC & 83.85 & 85.77 & \textcolor{blue}{+1.92} \\
LoCoOp & 77.55 & 81.74 & \textcolor{blue}{+4.18} \\
\bottomrule
\end{tabular}

\label{table:avg maxlogit}
\end{table}

\subsection{Experimental Results}
\label{sec:results}
\begin{sidewaystable*}

\centering
\caption{Near OOD AUROC ($\uparrow$) of ViT-B/16, CLIP-based zero-shot, and CLIP prompt learning 16-shot. The best-performing results in the last category are bolded, while the second-best are underlined.}

\begin{adjustbox}{scale=0.7,center}
\begin{tabular}{ccccccccccccccc}
\toprule

 & ImageNet & Caltech101 & OxfordPets & StanfordCars & Flowers102 & Food101 & FGVCAircraft & SUN397 & DTD & EuroSAT & UCF101 & CIFAR10 & CIFAR100 & Avg \\
 \midrule
 &\multicolumn{14}{c}{{Fully Fine-tuned ViT-B/16 Classifier}} \\
 ASH & \color{gray} 92.84 & \color{gray}94.30 & \color{gray}91.51 &\color{gray} 87.36 & \color{gray}92.62 & \color{gray}83.97 & \color{gray}50.46 & \color{gray}73.50 & \color{gray}78.64 & \color{gray}{84.23} & \color{gray}{86.74} &\color{gray} 96.54 & \color{gray}88.19 & \color{gray}84.68 \\
GradNorm & \color{gray}87.63 & \color{gray}93.41 & \color{gray}92.00 & \color{gray}87.50 & \color{gray}93.75 &\color{gray} 83.03 & \color{gray}44.44 & \color{gray}73.73 & \color{gray}77.43 & \color{gray}79.02 & \color{gray}85.62 & \color{gray}88.23 & \color{gray}80.25 & \color{gray}82.00 \\
KNN & \color{gray}92.84 & \color{gray}73.42 & \color{gray}73.93 & \color{gray}40.12 &\color{gray} 52.50 &\color{gray} 79.38 & \color{gray}68.52 & \color{gray}58.97 & \color{gray}42.08 &\color{gray} 76.14 &\color{gray} 73.79 & \color{gray}{96.82} & \color{gray}85.85 & \color{gray}70.34 \\
MDS & \color{gray}{96.22} &\color{gray} 94.21 &\color{gray} 86.14 &\color{gray} 84.72 &\color{gray} 86.68 &\color{gray} 75.59 &\color{gray} 68.92 &\color{gray} 73.17 &\color{gray} 78.55 &\color{gray} 77.76 &\color{gray} 83.05 &\color{gray} 96.70 &\color{gray} {90.16} &\color{gray} 83.99 \\
MSP &\color{gray} 86.28 &\color{gray} 93.77 &\color{gray} 89.16 &\color{gray} 86.99 &\color{gray} 94.23 &\color{gray} 82.12 &\color{gray} 42.36 &\color{gray} 70.80 &\color{gray} 77.05 &\color{gray} 81.97 &\color{gray} 82.92 &\color{gray} 95.03 &\color{gray} 82.03 &\color{gray} 81.90 \\
ODIN &\color{gray} 92.34 &\color{gray} 93.99 &\color{gray} 91.13 &\color{gray} 88.18 &\color{gray} 91.84 &\color{gray} 84.03 &\color{gray} 42.89 &\color{gray} 72.33 &\color{gray} 77.46 &\color{gray} 83.74 &\color{gray} 86.11 &\color{gray} 89.97 &\color{gray} 72.42 &\color{gray} 82.03 \\
React &\color{gray} 92.94 &\color{gray} {94.3}2 &\color{gray} 91.73 &\color{gray} 87.31 &\color{gray} 92.09 &\color{gray} 83.75 &\color{gray} 51.28 &\color{gray} 73.60 &\color{gray} {78.62} &\color{gray} 84.20 &\color{gray} {86.47} &\color{gray} 96.72 &\color{gray} 88.63 &\color{gray} 84.74 \\
RMDS &\color{gray} 93.87 &\color{gray} 92.42 &\color{gray} 93.26 &\color{gray} 86.70 &\color{gray} 93.58 &\color{gray} 81.51 &\color{gray} {77.28} &\color{gray} 72.64 &\color{gray} 76.61 &\color{gray} {85.29} &\color{gray} 83.46 &\color{gray} 95.76 &\color{gray} 87.09 &\color{gray} 86.11 \\
VIM &\color{gray} 95.52 &\color{gray} {94.42} &\color{gray} 87.99 &\color{gray} 84.87 &\color{gray} 88.03 &\color{gray} 80.81 &\color{gray} 61.07 &\color{gray} 73.02 &\color{gray} {79.14} &\color{gray} 82.34 &\color{gray} 85.01 &\color{gray} {97.05} &\color{gray} {90.00} &\color{gray} 84.56 \\ \midrule
 &\multicolumn{14}{c}{{Non-Prompt-Learning Zero-shot CLIP}} \\
MCM &\color{gray} 78.38 &\color{gray} 85.85 &\color{gray} 77.29 &\color{gray} 62.88 &\color{gray} 78.57 &\color{gray} 83.55 &\color{gray} 28.07 &\color{gray} 71.48 &\color{gray} 64.72 &\color{gray} 50.97 &\color{gray} 75.67 &\color{gray} 92.35 &\color{gray} 73.55 &\color{gray} 71.03 \\
NegLabel &\color{gray} 94.72 &\color{gray} 85.00 &\color{gray} 89.74 &\color{gray} 86.39 &\color{gray} 83.99 &\color{gray} 91.49 &\color{gray} 70.19 &\color{gray} 70.87 &\color{gray} 61.26 &\color{gray} 53.02 &\color{gray} 76.27 &\color{gray} 91.29 &\color{gray} 66.28 &\color{gray} 78.50 \\
AdaNeg &\color{gray} 94.87 &\color{gray} 52.93 &\color{gray} 88.26 &\color{gray} 83.71 &\color{gray} 74.12 &\color{gray} 90.85 &\color{gray} 69.59 &\color{gray} 70.33 &\color{gray} 54.24 &\color{gray} 53.05 &\color{gray} 77.13 &\color{gray} 91.34 &\color{gray} 65.77 &\color{gray} 74.32 \\
CLIPN &\color{gray} 94.17 &\color{gray} 87.08 &\color{gray} 82.65 &\color{gray} 90.63 &\color{gray} 82.58 &\color{gray} 85.61 &\color{gray} 44.20 &\color{gray} 80.03 &\color{gray} 71.45 &\color{gray} 57.56 &\color{gray} 79.09 &\color{gray} 95.05 &\color{gray} 85.27 &\color{gray} 79.64 \\ \midrule
 &\multicolumn{14}{c}{{CLIP Prompt Learning 16-shot}} \\
MaxLogit & 94.66 & 84.68 & 91.64 & 93.84 & 94.40 & 91.36 & 64.57 & 79.60 & 71.98 & \textbf{80.02} & 84.17 & 91.17 & 84.10 & 85.09 \\
Energy & 94.64 & 82.29 & 91.23 & 93.86 & 93.36 & 91.05 & \underline{71.61} & 78.24 & 70.23 & 78.87 & 82.61 & 89.58 & 82.25 & 84.60 \\
 LoCoOp (GL-MCM) & 94.02 & \textbf{91.70} & 86.62 & 82.90 & 89.54 & 89.58 & 37.23 & 79.89 & \textbf{76.61} & 69.08 & \underline{85.10} & 92.38 & 76.19 & 80.87 \\
NegPrompt & 92.60 & 87.34 & 90.26 & 89.79 & 93.00 & 91.60 & 46.90 & 76.56 & 67.21 & 69.98 & 83.79 & 90.74 & 80.33 & 81.55 \\
CLS-M (Ours)& \underline{95.60} & \underline{90.55} & \textbf{94.70} & \underline{95.35} & \textbf{95.69} & \textbf{92.63} & 66.79 & \textbf{82.05} & \underline{73.59} & \underline{79.52} & \textbf{86.03} & \textbf{93.22} & \textbf{86.24}  & \underline{87.07} \\
CLS-E (Ours)& \textbf{95.88} & 89.74 & \underline{94.58} & \textbf{95.49} & \underline{94.96} & \underline{92.49} & \textbf{74.28} & \underline{81.28} & 72.32 & 78.58 & 84.96 & \underline{92.53} & \underline{85.20} & \textbf{87.10} \\
\bottomrule
\end{tabular}
\end{adjustbox}
\label{tab: table3}
\end{sidewaystable*}

\begin{table*}

\centering
\caption{Far OOD AUROC ($\uparrow$) of CLIP-based zero-shot and CLIP prompt learning 16-shot. The best-performing results of the prompt learning methods are bold, while the second-best are underlined.}
\small
\begin{tabular}{cccccc}
\toprule

 & iNaturalist & SUN & Places & Textures & Avg \\
 \midrule
 &\multicolumn{5}{c}{{Non-Prompt-Learning Zero-shot CLIP}} \\
MCM & \color{gray} 94.59& \color{gray}  92.25& \color{gray}  90.31& \color{gray}  86.12& \color{gray}  90.82 \\
NegLabel & \color{gray}  99.49& \color{gray}  95.49& \color{gray}  91.64& \color{gray} 90.22 & \color{gray}  94.21 \\
AdaNeg & \color{gray} 99.71& \color{gray}  97.44& \color{gray} 94.55  & \color{gray} 94.93  & \color{gray} 96.66 \\
 &\multicolumn{5}{c}{{CLIP Prompt Learning 16-shot}} \\
MaxLogit & 97.16 & 97.44 & 96.88 & 96.21 & 96.92  \\
Energy & 93.97 & 95.60 & 95.74 & 94.57 & 94.97 \\
LoCoOp (GL-MCM) & \textbf{99.14} & \textbf{99.19} & \textbf{97.31} & \textbf{96.87} & \textbf{98.13} \\
CLS-M (Ours) & \underline{98.68} & \underline{98.75} & \underline{96.82} & \underline{96.64} & \underline{97.72}  \\
CLS-E (Ours) & 97.36 & 98.27 & 96.45 & 95.54 & 96.91 \\
\bottomrule
\end{tabular}
\label{tab: table4}
\end{table*}

\subsubsection{Comparison with MaxLogit and Energy Score}
Recall that when $\beta=0$, our proposed CLS-M and CLS-E reduce to MaxLogit and Energy score respectively, and CLS-M and CLS-E are tailored scoring functions for vision-language prompt learning methods.
In Table~\ref{table:avg maxlogit}, we compare near OOD detection AUROC using the MaxLogit score and our proposed CLS-M score with the difference presented in the last column. We report average AUROC with 16, 8, 4, 2, and 1-shot settings in Figure~\ref{fig:avg maxlogit auroc}.
Positive improvements in AUROC were observed in 100 out of 104 evaluations (13 datasets × 8 models) when using CLS-M, demonstrating its robustness across diverse datasets and model architectures. 

Notably, the largest improvement was observed with LoCoOp, a recent prompt learning model designed for OOD detection.
A similar trend is observed in Figure~\ref{fig:avg energy auroc}, where CLS-E improves AUROC in 101 out of 104 evaluations when used with the Energy score. Additionally, for a detailed breakdown of false positive rates (FPR) at 95\% true positive rate (TPR), refer to Figure~\ref{fig:avg maxlogit fpr95} (MaxLogit) and Figure~\ref{fig:avg energy fpr95} (Energy score).

\subsubsection{Comparison with other methods}
Table~\ref{tab: table3} presents a comprehensive comparison of OOD detection performance across three major categories.
For our methods, we use PromptSRC as the base prompt learning model with a 16-shot setting. The last category (i.e., CLIP Prompt Learning 16-shot results) includes the most directly comparable methods to ours which are either scoring functions applicable to CLIP-based prompt learning or methods specifically designed for it. Our proposed methods, CLS-M and CLS-E, achieve the best overall performance in this category, outperforming the others in this category on most of the datasets. The best performing one within the same category is in bold font, and the second best one is underlined.
When comparing across different categories, our methods consistently rank among the top-performing approaches across most datasets. Overall, our results demonstrate that CLS-M and CLS-E provide a superior post-hoc scoring method for OOD detection in prompt learning models, achieving state-of-the-art performance while maintaining computational efficiency.

\subsubsection{Comparisons in the Far OOD Detection Setting} 
Although our method is not specifically designed for far OOD detection, it can be potentially extended to this setting. Following prior OOD detection studies on CLIP~\citep{ming2022delving,miyai2023locoop,wang2023clipn,jiang2024negative}, we use ImageNet as the ID dataset and iNaturalist~\citep{van2018the}, SUN~\citep{xiao2010sun}, Places~\citep{zhou2018places}, and Texture~\citep{climpoi2014describing} as far OOD datasets. We leveraged the fine-tuned models from our experiments and report the corresponding AUROC results in Table~\ref{tab: table4}. LoCoOp serves as the backbone for the CLIP prompt learning methods. The results of the zero-shot methods are taken from~\citet{zhang2024adaneg}. 

Our method achieves the second-best performance, with only a marginal difference from GL-MCM (0.41 average AUROC). Although GL-MCM performs slightly better in far OOD settings, the performance gap is considerably smaller than that observed in near OOD settings (6.23 average AUROC), where our approach outperforms GL-MCM. Since, in practice, the type of OOD data (near or far) is typically unknown, our method offers greater overall reliability.

To provide a more comprehensive view of our approach’s effectiveness across different prompt learning methods, we report detailed AUROC and FPR95 results in Tables~\ref{table:ood auroc} and~\ref{table:ood fpr} in Appendix~\ref{appendix: far ood}. Across these experiments, our approach consistently outperforms the MaxLogit and Energy baselines, achieving improvements in 126 out of 128 evaluations.

\section{Conclusion}
In this work, we address few-shot near OOD detection of CLIP-based prompt learning models,  a crucial challenge for deploying vision-language models in real-world applications. %
To address this issue, we propose a simple and efficient post-hoc method that can be seamlessly applied to any vision-language prompt learning model. Our approach enhances near OOD performance without modifying the model architecture or training procedure, ensuring that classification accuracy remains unaffected. By refining confidence estimations, our method significantly improves near OOD AUROC and reduces FPR95, leading to more reliable OOD detection.
Through extensive experiments on 8 state-of-the-art prompt learning models and 13 real-world datasets, we demonstrate that our method consistently achieve competitive near OOD detection performance while maintaining computational efficiency, in the comparisons with multiple kinds of baselines. 

While our method is broadly applicable across various prompt learning models, the degree of improvement can vary depending on the underlying model characteristics. Some models may not see as substantial a benefit, particularly if their inherent logit distributions already exhibit strong separability between ID and OOD samples. 

\section*{Declarations}
We acknowledge Lan Du for the useful discussions.
\subsubsection*{Funding Information}
Not applicable.

\backmatter

\begin{appendices}

\section{Appendix}
\begin{table*}[!h]
\centering
\caption{Training details of the prompt learning models.}
\begin{tabular}{cccc}
\toprule
& \# Epochs & Batch Size & Context Vectors Initialisation  \\
\midrule
CoOp &  \makecell{50 (ImageNet)\\ 200 (Others)} & 32 & \multirow{7}{*}{``a photo of a''}  \\
CoCoOp &  10 & 1 &   \\
IVLP & 5 & 4 &  \\
KgCoOp & 100 & 128 &  \\
ProGrad & 200 & 32 &  \\
MaPLe & 5 & 4 &  \\
PromptSRC & 20 & 4 &  \\
LoCoOp & 50 & 32 & 16 vectors drawn from $\mathcal{N}(0,0.02)$\\
\bottomrule
\end{tabular}
\label{table:implementation}
\end{table*}
\subsection{Proof of Lemma}
\label{appendix:proof}
We provide the proof of Lemma~\ref{lemma:bivariate}. For completeness of proof, we duplicate the lemma here.
\begin{lemma}
\label{appendix lemma:bivariate}
    Given $N$ scalar observations $\{\hat{x}_i\}_{i=1}^N$ and $\{\hat{y}_i\}_{i=1}^N$, we define two variables $x=\hat{x}$ and $y=\hat{y}-\beta \cdot \hat{x}$. The scale parameter $\beta$ that zeros out the covariance of two variables (i.e., the off-diagonals of a covariance matrix) which is approximated by maximum likelihood estimation is:
    \begin{equation}
        \beta =\frac{\sum_{i=1}^N{(\hat{x}_i-\mu_{\hat{x}})(\hat{y}_i-\mu_{\hat{y}})}}{\sum_{i=1}^N{(\hat{x}_i-\mu_{\hat{x}})^2}}
    \end{equation}
    where $\mu_{\hat{x}}=\frac{1}{N}\sum_{i=1}^N\hat{x}_i$ and $\mu_{\hat{y}}=\frac{1}{N}\sum_{i=1}^N\hat{y}_i$.
\end{lemma}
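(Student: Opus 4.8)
The plan is to compute the sample covariance matrix of the two derived variables $x = \hat{x}$ and $y = \hat{y} - \beta\hat{x}$ as produced by maximum likelihood estimation for a bivariate normal, and then solve for the value of $\beta$ that makes the off-diagonal entry vanish. The key observation is that the MLE of the covariance matrix of a bivariate normal is simply the (biased, $\tfrac{1}{N}$-normalised) empirical covariance matrix, so ``zeroing the off-diagonals of the fitted covariance matrix'' is the same as requiring the empirical covariance between $x$ and $y$ to be zero.

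First I would record the sample means: $\mu_x = \mu_{\hat{x}}$ and, by linearity, $\mu_y = \mu_{\hat{y}} - \beta\mu_{\hat{x}}$. Then I would write the empirical covariance as
\begin{align}
\widehat{\mathrm{Cov}}(x,y) = \frac{1}{N}\sum_{i=1}^N (\hat{x}_i - \mu_{\hat{x}})\bigl((\hat{y}_i - \beta\hat{x}_i) - (\mu_{\hat{y}} - \beta\mu_{\hat{x}})\bigr).
\end{align}
Regrouping the inner factor as $(\hat{y}_i - \mu_{\hat{y}}) - \beta(\hat{x}_i - \mu_{\hat{x}})$ and distributing, this splits into
\begin{align}
\widehat{\mathrm{Cov}}(x,y) = \frac{1}{N}\sum_{i=1}^N (\hat{x}_i - \mu_{\hat{x}})(\hat{y}_i - \mu_{\hat{y}}) - \frac{\beta}{N}\sum_{i=1}^N (\hat{x}_i - \mu_{\hat{x}})^2.
\end{align}
Setting this expression to zero and solving for $\beta$ immediately yields the claimed formula, since the coefficient of $\beta$ is $-\tfrac{1}{N}\sum_i (\hat{x}_i - \mu_{\hat{x}})^2$, which is nonzero whenever the $\hat{x}_i$ are not all equal.

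There is no real obstacle here; the only point requiring a sentence of justification is the identification of the MLE covariance matrix with the empirical covariance matrix — I would cite the standard fact that for a multivariate normal the MLE of the covariance is $\tfrac{1}{N}\sum_i (z_i - \bar z)(z_i - \bar z)^\top$, so the off-diagonal MLE entry is exactly $\widehat{\mathrm{Cov}}(x,y)$ above, and the factor $1/N$ is irrelevant to the root. I would also note the mild nondegeneracy assumption that $\sum_i(\hat{x}_i - \mu_{\hat{x}})^2 > 0$ so that the division is valid and the covariance is genuinely a strictly decreasing linear function of $\beta$, giving a unique solution. The whole argument is two or three lines of algebra once the MLE characterisation is invoked.
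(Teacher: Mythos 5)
Your proposal is correct and follows essentially the same route as the paper's own proof: invoke the standard fact that the MLE covariance of a bivariate normal is the $\tfrac{1}{N}$-normalised empirical covariance, substitute $y=\hat{y}-\beta\hat{x}$, expand the cross term as $(\hat{y}_i-\mu_{\hat{y}})-\beta(\hat{x}_i-\mu_{\hat{x}})$, set it to zero, and solve for $\beta$. Your explicit remark that $\sum_i(\hat{x}_i-\mu_{\hat{x}})^2>0$ is needed for the division is a small, sensible addition the paper leaves implicit.
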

\begin{proof}
    It is well known that maximum likelihood estimation (MLE) of bivariate normal distribution for $N$ observations of variables $x$ and $y$ results in~\citep{bishop2013pattern}:
    \begin{align}
    &\mu_{x}=\frac{1}{N}\sum_{i}{x_i},\quad \mu_{y}=\frac{1}{N}\sum_{i}{y_i} \\
    &\Sigma=\begin{bmatrix}
        \sigma_{xx} & \sigma_{xy} \\
        \sigma_{xy} & \sigma_{yy}
    \end{bmatrix} \\
    &=\frac{1}{N} \sum_i {\left(\begin{bmatrix}
        x_i \\
        y_i
    \end{bmatrix}-\begin{bmatrix}
        \mu_{x} \\
        \mu_{y}
    \end{bmatrix}\right)\left(\begin{bmatrix}
        x_i \\
        y_i
    \end{bmatrix}-\begin{bmatrix}
        \mu_{x} \\
        \mu_{y}
    \end{bmatrix}\right)^T} \\
    &\sigma_{xy} = \frac{1}{N} \sum_i{(x_i-\mu_x)(y_i-\mu_y)}
\end{align}
where $\mu_{x}$ and $\mu_{y}$ are the means of $x$ and $y$, and $\Sigma$ is the covariance matrix. We let $x=\hat{x}$ and $y=\hat{y}-\beta \cdot \hat{x}$ and find $\beta$ that makes $\sigma_{xy}=0$. By rewriting $\sigma_{xy}$ in terms of $\hat{x}$ and $\hat{y}$, we obtain $\beta$ as:
\begin{align}
    \sigma_{xy}&=\frac{1}{N}\sum_i{(\hat{x}_i-\mu_{\hat{x}})(\hat{y}_i-\beta\cdot \hat{x}_i -\mu_{\hat{y}}+\beta\cdot \mu_{\hat{x}})}=0 \\
    \beta&=\frac{\sum_i{(\hat{x}_i-\mu_{\hat{x}})(\hat{y}_i-\mu_{\hat{y}})}}{\sum_i{(\hat{x}_i-\mu_{\hat{x}})^2}}
\end{align}
The resulting $\beta$ is the ratio of covariance of $\hat{x}$ and $\hat{y}$ to variance of $\hat{x}$.
\end{proof}

\subsection{Implementation Details}
\label{appendix:implementation}
We follow the officially released training guidelines for each prompt learning model using the same configuration files. The only additional line of code required is \texttt{beta=(((y-y.mean())*(x-x.mean())).sum())\\/(((x-x.mean())**2).sum())} to estimate the margin scale in Eq.(\ref{eq:scale}). Table~\ref{table:implementation} shows common hyperparameters which are the number of epochs, batch size, and context vectors initialisation. Refer to their officially released codes for other model-specific hyperparameters. All models were trained on a single NVIDIA GeForce RTX 3090 GPU with PyTorch framework. The temperature scaling is 0.01 for the Energy score and 1 for the MCM score.

For all other OOD methods, we followed the implementations from \url{https://github.com/YBZh/OpenOOD-VLM}, \url{https://github.com/mala-lab/NegPrompt}, and \url{https://github.com/xmed-lab/CLIPN}.

\subsection{Additional Experimental Results}
\label{appendix:additional}

We provide additional experimental results other than the results in the main section. 

\subsubsection{MaxLogit Score} We provide difference in average AUROC and FPR95 across 1-, 2-, 4-, 8-, and 16-shot settings between CLS-M and
the MaxLogit score in Figure~\ref{fig:avg maxlogit auroc} and Figure~\ref{fig:avg maxlogit fpr95}.

\begin{figure*}[!t]
    \centering
    \includegraphics[width=0.95\textwidth]{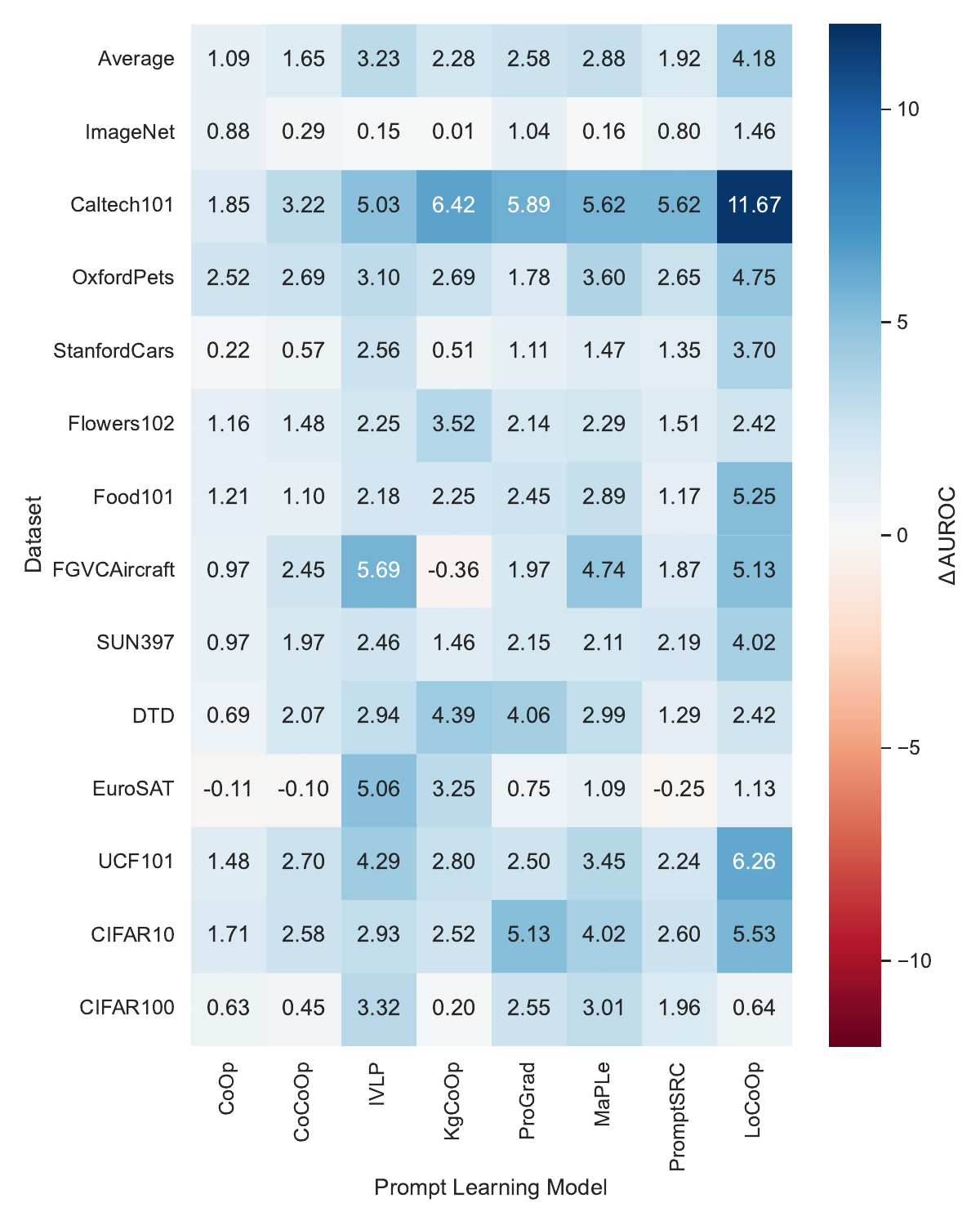}
    \caption{Average difference in AUROC across 1-, 2-, 4-, 8-, and 16-shot settings between CLS-M and the MaxLogit score ($\Delta \text{AUROC}=\text{CLS-M}-\text{MaxLogit}$) evaluated on 13 datasets and 8 prompt learning models. On average across datasets, CLS-M achieves consistent performance gains over MaxLogit across all models.}
    \label{fig:avg maxlogit auroc}
\end{figure*}

\begin{figure*}[!t]
    \centering
    \includegraphics[width=0.95\textwidth]{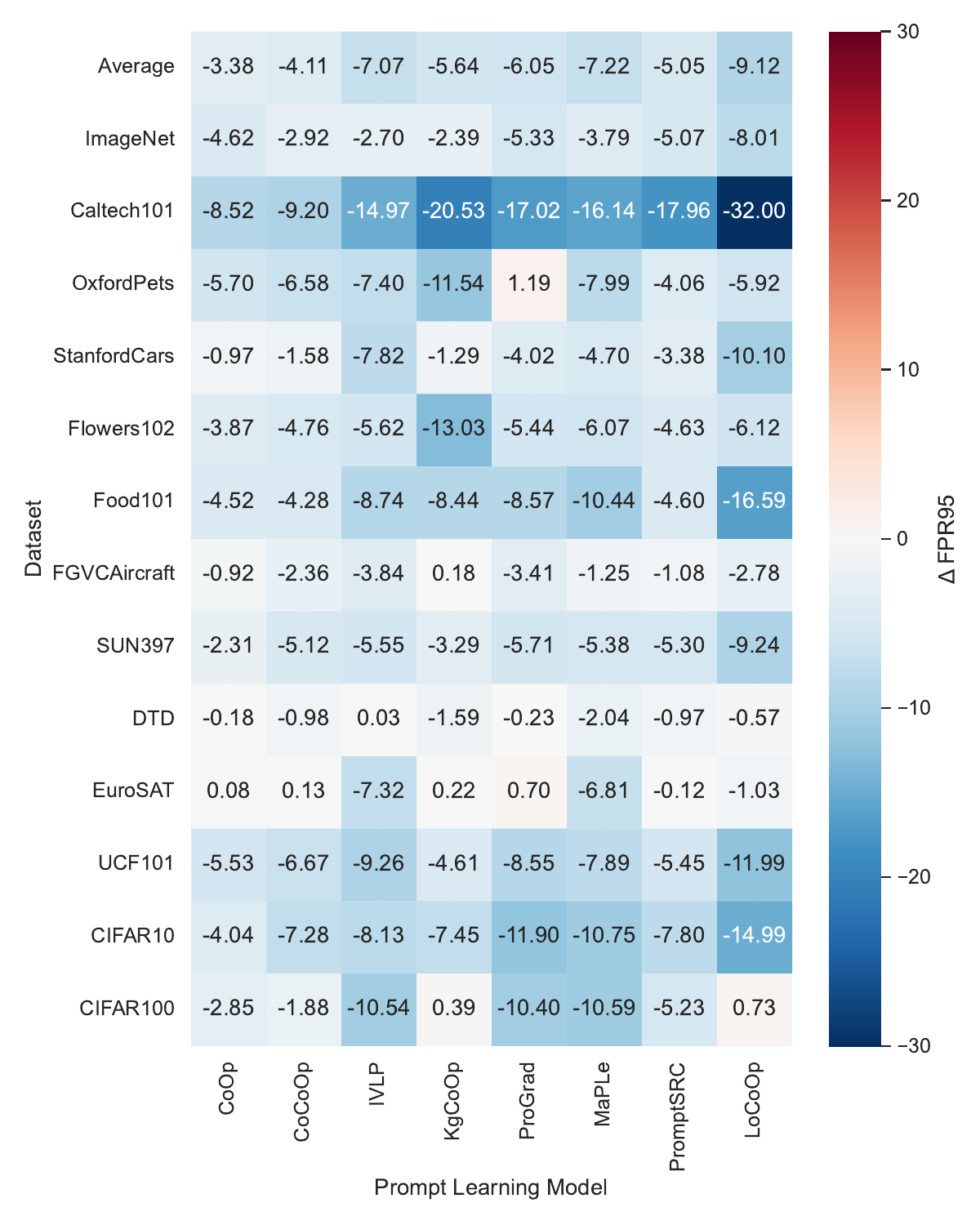}
    \caption{Average difference in FPR95 across 1-, 2-, 4-, 8-, and 16-shot settings between CLS-M and the MaxLogit score ($\Delta \text{FPR95}=\text{CLS-M}-\text{MaxLogit}$) evaluated on 13 datasets and 8 prompt learning models. On average across datasets, CLS-M achieves consistent performance gains over MaxLogit across all models.}
    \label{fig:avg maxlogit fpr95}
\end{figure*}

\subsubsection{Energy Score}
We provide the same results of Figure~\ref{fig:avg maxlogit auroc} and Figure~\ref{fig:avg maxlogit fpr95} using Energy score in Figure~\ref{fig:avg energy auroc} and Figure~\ref{fig:avg energy fpr95}.

\begin{figure*}[!t]
    \centering
    \includegraphics[width=0.95\textwidth]{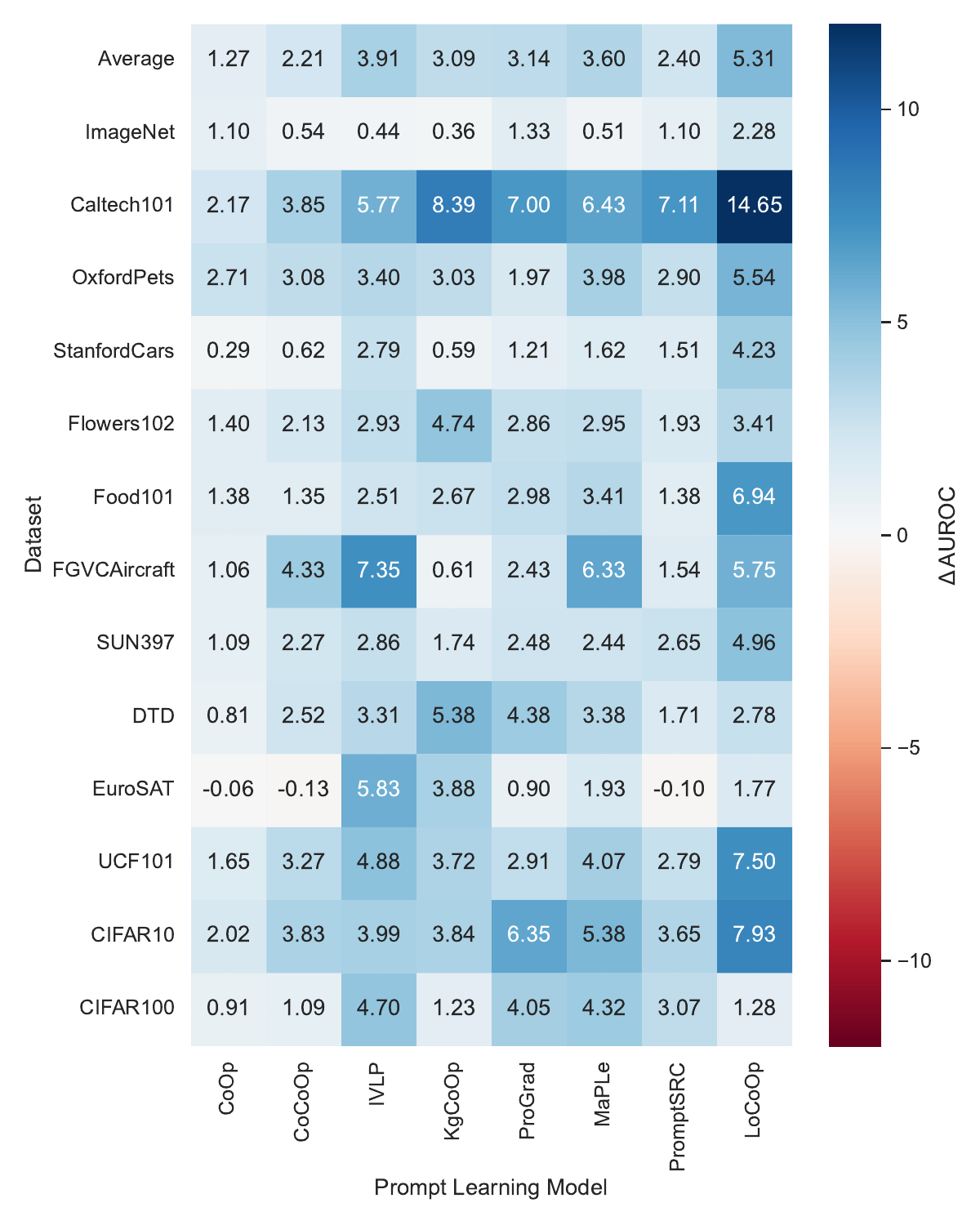}
    \caption{Average difference in AUROC across 1-, 2-, 4-, 8-, and 16-shot settings between CLS-E and the Energy score ($\Delta \text{AUROC}=\text{CLS-E}-\text{Energy}$) evaluated on 13 datasets and 8 prompt learning models. On average across datasets, CLS-E achieves consistent performance gains over Energy across all models.}
    \label{fig:avg energy auroc}
\end{figure*}

\begin{figure*}[!t]
    \centering
    \includegraphics[width=0.95\textwidth]{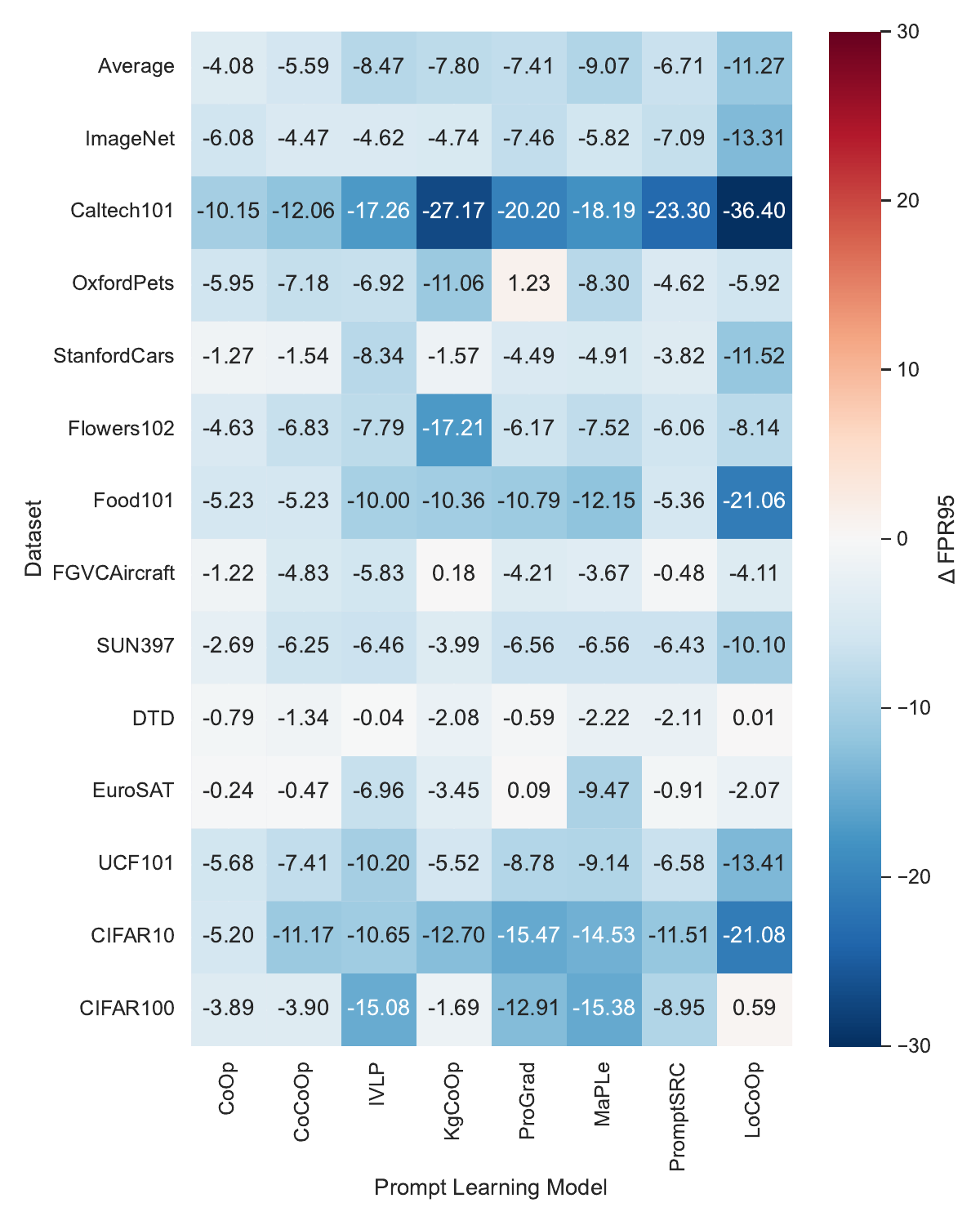}
    \caption{Average difference in FPR95 across 1-, 2-, 4-, 8-, and 16-shot settings between CLS-E and the Energy score ($\Delta \text{FPR95}=\text{CLS-E}-\text{Energy}$) evaluated on 13 datasets and 8 prompt learning models. On average across datasets, CLS-E achieves consistent performance gains over Energy across all models.}
    \label{fig:avg energy fpr95}
\end{figure*}

\subsubsection{Far OOD Detection} 
\label{appendix: far ood}
We provide additional far OOD detection results of AUROC and FPR in Table~\ref{table:ood auroc} and Table~\ref{table:ood fpr} respectively. 

\begin{table*}[!h]
\centering
\caption{Far OOD detection AUROC ($\uparrow$) averaged over 16, 8, 4, 2, and 1 few-shot settings with 3 random seeds.}
\begin{subtable}[t]{0.8\textwidth}
\caption{Places.}
\centering
\centering\resizebox{0.7\textwidth}{!}{

\begin{tabular}{ccccccc}
\toprule
 & \multicolumn{3}{c}{MaxLogit} & \multicolumn{3}{c}{Energy} \\  \cmidrule(l){2-4} \cmidrule(l){5-7}   & Original & CLS-M & $\triangle$ & Original & CLS-E & $\triangle$ \\
\midrule
CoOp & 96.22 & {96.78} & \textcolor{blue}{+0.56} & 95.80 & 96.65 & \textcolor{blue}{+0.85} \\
CoCoOp & {96.90} & 96.79 & \textcolor{red}{-0.11} & 96.50 & 96.51 & \textcolor{blue}{+0.01}  \\
IVLP & 95.79 & {96.01} & \textcolor{blue}{+0.22} & 95.18 & 95.77 & \textcolor{blue}{+0.60}  \\
KgCoOp & 96.67 & {96.91} & \textcolor{blue}{+0.24} & 96.12 & 96.74 & \textcolor{blue}{+0.62} \\
ProGrad & 95.21 & {96.05} & \textcolor{blue}{+0.84} & 94.52 & 95.76 & \textcolor{blue}{+1.25}  \\
MaPLe & {96.61} & 96.52 & \textcolor{red}{-0.09} & 96.00 & 96.11 & \textcolor{blue}{+0.11} \\
PromptSRC & 96.54 & {96.59} & \textcolor{blue}{+0.05} & 96.00 & 96.20 & \textcolor{blue}{+0.21}  \\
LoCoOp & 94.67 & {97.10} & \textcolor{blue}{+2.43} & 91.59 & 96.45 & \textcolor{blue}{+4.86}  \\
\bottomrule
\end{tabular}
}
\end{subtable}
\hfill\begin{subtable}[t]{0.8\textwidth}
\caption{SUN.}

\centering\resizebox{0.7\textwidth}{!}{
\centering
\begin{tabular}{ccccccc}
\toprule
 & \multicolumn{3}{c}{MaxLogit} & \multicolumn{3}{c}{Energy} \\  \cmidrule(l){2-4} \cmidrule(l){5-7}   & Original & CLS-M & $\triangle$ & Original & CLS-E & $\triangle$ \\
\midrule
CoOp & 96.49 & 97.88 & \textcolor{blue}{+1.39} & 95.69 & 97.62 & \textcolor{blue}{+1.93} \\
CoCoOp & 98.17 & {98.47} & \textcolor{blue}{+0.30} & 97.64 & 98.21 & \textcolor{blue}{+0.57} \\
IVLP & 97.61 & 98.20 & \textcolor{blue}{+0.58} & 96.98 & 97.97 & \textcolor{blue}{+0.99}  \\
KgCoOp & 97.50 & {98.47} & \textcolor{blue}{+0.96} & 96.61 & 98.29 & \textcolor{blue}{+1.68} \\
ProGrad & 96.51 & {97.73} & \textcolor{blue}{+1.22} & 95.60 & 97.35 & \textcolor{blue}{+1.75}  \\
MaPLe & 97.91 & {98.54} & \textcolor{blue}{+0.63} & 97.13 & 98.21 & \textcolor{blue}{+1.07} \\
PromptSRC & 97.63 & 98.11 & \textcolor{blue}{+0.48} & 96.88 & 97.64 & \textcolor{blue}{+0.76}  \\
LoCoOp & 95.38 & {98.94} & \textcolor{blue}{+3.56} & 90.79 & 98.20 & \textcolor{blue}{+7.40} \\
\bottomrule
\end{tabular}
}
\end{subtable}
\hfill\begin{subtable}[t]{0.8\textwidth}
\caption{Texture.}
\centering\resizebox{0.7\textwidth}{!}{
\centering
\begin{tabular}{ccccccc}
\toprule
 & \multicolumn{3}{c}{MaxLogit} & \multicolumn{3}{c}{Energy} \\  \cmidrule(l){2-4} \cmidrule(l){5-7}   & Original & CLS-M & $\triangle$ & Original & CLS-E & $\triangle$ \\
\midrule
CoOp & 93.38 & 93.74 & \textcolor{blue}{+0.36} & 92.51 & 93.07 & \textcolor{blue}{+0.56}\\
CoCoOp & 94.79 & 95.29 & \textcolor{blue}{+0.50} & 93.72 & 94.51 & \textcolor{blue}{+0.79}  \\
IVLP & 93.32 & 95.49 & \textcolor{blue}{+2.17} & 91.78 & 94.90 & \textcolor{blue}{+3.12}  \\
KgCoOp & 93.59 & 96.30 & \textcolor{blue}{+2.71} & 92.03 & 95.90 & \textcolor{blue}{+3.86}  \\
ProGrad & 91.36 & 94.22 & \textcolor{blue}{+2.86} & 89.72 & 93.37 & \textcolor{blue}{+3.65} \\
MaPLe & 93.36 & 93.89 & \textcolor{blue}{+0.52} & 91.73 & 92.64 & \textcolor{blue}{+0.91} \\
PromptSRC & 92.69 & 94.52 & \textcolor{blue}{+1.83} & 90.94 & 93.52 & \textcolor{blue}{+2.58}  \\
LoCoOp & 93.18 & 96.16 & \textcolor{blue}{+2.98} & 89.80 & 94.82 & \textcolor{blue}{+5.02}  \\
\bottomrule
\end{tabular}
}
\end{subtable}
\begin{subtable}[t]{0.8\textwidth}
\centering
\caption{iNaturalist.}
\centering\resizebox{0.7\textwidth}{!}{\centering
\centering
\begin{tabular}{ccccccc}
\toprule
 & \multicolumn{3}{c}{MaxLogit} & \multicolumn{3}{c}{Energy} \\  \cmidrule(l){2-4} \cmidrule(l){5-7}   & Original & CLS-M & $\triangle$ & Original & CLS-E & $\triangle$ \\
\midrule
CoOp & 94.58 & 97.23 & \textcolor{blue}{+2.65} & 92.66 & 96.28 & \textcolor{blue}{+3.62} \\
CoCoOp & 97.02 & {97.65} & \textcolor{blue}{+0.63} & 95.61 & 96.58 & \textcolor{blue}{+0.97}  \\
IVLP & 95.98 & 97.88 & \textcolor{blue}{+1.91} & 93.74 & 96.89 & \textcolor{blue}{+3.15} \\
KgCoOp & 97.05 & {98.49} & \textcolor{blue}{+1.44} & 95.61 & 98.00 & \textcolor{blue}{+2.39}\\
ProGrad & 92.90 & 96.04 & \textcolor{blue}{+3.14} & 90.46 & 94.73 & \textcolor{blue}{+4.27} \\
MaPLe & 95.80 & 97.60 & \textcolor{blue}{+1.80} & 93.43 & 96.25 & \textcolor{blue}{+2.82}  \\
PromptSRC & 96.37 & 96.76 & \textcolor{blue}{+0.38} & 94.41 & 95.01 & \textcolor{blue}{+0.60}  \\
LoCoOp & 93.41 & {98.46} & \textcolor{blue}{+5.05} & 87.11 & 97.08 & \textcolor{blue}{+9.97}\\
\bottomrule
\end{tabular}
}
\end{subtable}

\label{table:ood auroc}
\end{table*}

\begin{table*}[!h]
\centering
\caption{Far OOD detection FPR95 ($\downarrow$) averaged over 16, 8, 4, 2, and 1 few-shot settings with 3 random seeds.}
\begin{subtable}[t]{0.8\textwidth}
\caption{Places.}
\centering\resizebox{0.7\textwidth}{!}{
\centering
\begin{tabular}{ccccccc}
\toprule
 & \multicolumn{3}{c}{MaxLogit} & \multicolumn{3}{c}{Energy} \\  \cmidrule(l){2-4} \cmidrule(l){5-7}   & Original & CLS-M & $\triangle$ & Original & CLS-E & $\triangle$ \\
\midrule
CoOp & 17.58 & {13.56} & \textcolor{blue}{-4.01} & 20.90 & 14.42 & \textcolor{blue}{-6.48}  \\
CoCoOp & 13.01 & {12.92} & \textcolor{blue}{-0.09} & 15.41 & 14.09 & \textcolor{blue}{-1.33} \\
IVLP & 17.82 & {15.32} & \textcolor{blue}{-2.50} & 22.14 & 16.67 & \textcolor{blue}{-5.47} \\
KgCoOp & 14.43 & {12.46} & \textcolor{blue}{-1.97} & 18.32 & 13.33 & \textcolor{blue}{-4.99}  \\
ProGrad & 22.47 & {16.28} & \textcolor{blue}{-6.19} & 28.38 & 17.85 & \textcolor{blue}{-10.52}  \\
MaPLe & 15.37 & {14.01} & \textcolor{blue}{-1.37} & 19.35 & 16.74 & \textcolor{blue}{-2.62}  \\
PromptSRC & 14.88 & {13.77} & \textcolor{blue}{-1.11} & 18.70 & 16.04 & \textcolor{blue}{-2.66} \\
LoCoOp & 27.58 & {11.05} & \textcolor{blue}{-16.52} & 53.58 & 14.25 & \textcolor{blue}{-39.33} \\
\bottomrule
\end{tabular}
}
\end{subtable}
\hfill\begin{subtable}[t]{0.8\textwidth}
\caption{SUN.}
\centering\resizebox{0.7\textwidth}{!}{
\centering
\begin{tabular}{ccccccc}
\toprule
 & \multicolumn{3}{c}{MaxLogit} & \multicolumn{3}{c}{Energy} \\  \cmidrule(l){2-4} \cmidrule(l){5-7}   & Original & CLS-M & $\triangle$ & Original & CLS-E & $\triangle$ \\
\midrule
CoOp & 19.44 & {10.06} & \textcolor{blue}{-9.38} & 26.66 & 11.60 & \textcolor{blue}{-15.06}\\
CoCoOp & 8.37 & {6.39} & \textcolor{blue}{-1.98} & 11.64 & 7.18 & \textcolor{blue}{-4.46}  \\
IVLP & 10.77 & 7.92 & \textcolor{blue}{-2.86} & 15.85 & 9.35 & \textcolor{blue}{-6.50}  \\
KgCoOp & 12.98 & {6.81} & \textcolor{blue}{-6.17} & 20.58 & 7.32 & \textcolor{blue}{-13.26} \\
ProGrad & 19.14 & {10.37} & \textcolor{blue}{-8.77} & 27.60 & 12.33 & \textcolor{blue}{-15.28} \\
MaPLe & 9.67 & {5.91} & \textcolor{blue}{-3.76} & 15.09 & 7.69 & \textcolor{blue}{-7.40} \\
PromptSRC & 11.48 & 8.37 & \textcolor{blue}{-3.11} & 17.35 & 11.28 & \textcolor{blue}{-6.08}\\
LoCoOp & 29.82 & {3.76} & \textcolor{blue}{-26.06} & 64.29 & 7.80 & \textcolor{blue}{-56.49} \\
\bottomrule
\end{tabular}
}
\end{subtable}
\hfill\begin{subtable}[t]{0.8\textwidth}
\caption{Texture.}
\centering\resizebox{0.7\textwidth}{!}{
\centering
\begin{tabular}{ccccccc}
\toprule
 & \multicolumn{3}{c}{MaxLogit} & \multicolumn{3}{c}{Energy} \\  \cmidrule(l){2-4} \cmidrule(l){5-7}   & Original & CLS-M & $\triangle$ & Original & CLS-E & $\triangle$ \\
\midrule
CoOp & 35.02 & 31.39 & \textcolor{blue}{-3.63} & 43.04 & 37.32 & \textcolor{blue}{-5.72}  \\
CoCoOp & 26.11 & 23.14 & \textcolor{blue}{-2.96} & 36.68 & 29.31 & \textcolor{blue}{-7.37} \\
IVLP & 33.43 & 20.25 & \textcolor{blue}{-13.18} & 43.80 & 24.40 & \textcolor{blue}{-19.40}  \\
KgCoOp & 33.06 & 18.30 & \textcolor{blue}{-14.76} & 46.06 & 22.26 & \textcolor{blue}{-23.79}  \\
ProGrad & 42.31 & 28.67 & \textcolor{blue}{-13.64} & 54.27 & 36.97 & \textcolor{blue}{-17.30}  \\
MaPLe & 35.20 & 29.43 & \textcolor{blue}{-5.77} & 47.83 & 40.47 & \textcolor{blue}{-7.36} \\
PromptSRC & 37.51 & 26.90 & \textcolor{blue}{-10.61} & 52.59 & 36.40 & \textcolor{blue}{-16.19}  \\
LoCoOp & 39.95 & 19.69 & \textcolor{blue}{-20.25} & 68.07 & 32.19 & \textcolor{blue}{-35.87}  \\
\bottomrule
\end{tabular}
}
\end{subtable}
\begin{subtable}[t]{0.8\textwidth}\centering
\caption{iNaturalist.}
\centering\resizebox{0.7\textwidth}{!}{
\begin{tabular}{ccccccc}
\toprule
 & \multicolumn{3}{c}{MaxLogit} & \multicolumn{3}{c}{Energy} \\  \cmidrule(l){2-4} \cmidrule(l){5-7}   & Original & CLS-M & $\triangle$ & Original & CLS-E & $\triangle$ \\
\midrule
CoOp & 32.70 & 13.64 & \textcolor{blue}{-19.07} & 48.16 & 20.24 & \textcolor{blue}{-27.93}  \\
CoCoOp & 14.03 & {9.68} & \textcolor{blue}{-4.35} & 24.75 & 16.71 & \textcolor{blue}{-8.04} \\
IVLP & 22.51 & 8.85 & \textcolor{blue}{-13.66} & 39.50 & 15.17 & \textcolor{blue}{-24.33}  \\
KgCoOp & 14.76 & {5.76} & \textcolor{blue}{-9.00} & 25.50 & 8.60 & \textcolor{blue}{-16.90}  \\
ProGrad & 43.71 & 21.29 & \textcolor{blue}{-22.42} & 58.34 & 32.44 & \textcolor{blue}{-25.90}\\
MaPLe & 23.97 & 10.38 & \textcolor{blue}{-13.58} & 42.23 & 19.62 & \textcolor{blue}{-22.62}  \\
PromptSRC & 19.38 & 16.03 & \textcolor{blue}{-3.35} & 33.65 & 29.98 & \textcolor{blue}{-3.67}  \\
LoCoOp & 51.09 & {6.17} & \textcolor{blue}{-44.92} & 85.97 & 18.46 & \textcolor{blue}{-67.51}\\
\bottomrule
\end{tabular}
}
\end{subtable}
\label{table:ood fpr}

\end{table*}

\subsubsection{ImageNet Protocol Results}
In addition to 13 datasets used in the main experiments, we also provide experimental results on ImageNet Protocol~\citep{Palechor2023large} in Table~\ref{table:imagenet protocol}. We follow the four-split setting used by \citet{Li2024learning}.

\begin{table*}[!h]
\centering
\caption{OOD AUROC ($\uparrow$) of 8 prompt learning models averaged over 4 ImageNet protocol datasets using the MaxLogit score, the Energy score, CLS-M, and CLS-E.}
\begin{tabular}{ccccc}
\toprule
& MaxLogit  & CLS-M & Energy & CLS-E  \\ 
\midrule {CoOp}  &  96.46  &  \textbf{96.74}  &  96.43  &  96.72   \\ {CoCoOp} & 97.42 &  \textbf{97.69} &  97.34  &  97.65   \\ {IVLP} & 97.19  &  \textbf{97.60}  &  97.03  &  97.50  \\ {KgCoOp} & 97.34  &  \textbf{97.57}  &  97.24  &  97.50  \\ {ProGrad} & 96.84  & \textbf{97.20}  & 96.73  & 97.12  \\ {MaPLe} & \textbf{97.48} & \textbf{97.48} & 97.38 & 97.23 \\ {PromptSRC} & 97.57 & \textbf{97.73} & 97.49 & 97.66  \\ {LoCoOp} & 96.59 & \textbf{97.19} & 96.15 & 96.94  \\
\bottomrule
\end{tabular}
\label{table:imagenet protocol}
\end{table*}
\citep{pmlr-v202-li23q}\citep{Patashnik_2021_ICCV}\citep{Xu_2022_CVPR}
\end{appendices}

\clearpage
\newpage
\bibliography{sn-bibliography}

@InProceedings{radford2021learning,
  title = 	 {Learning Transferable Visual Models From Natural Language Supervision},
  author =       {Radford, Alec and Kim, Jong Wook and Hallacy, Chris and Ramesh, Aditya and Goh, Gabriel and Agarwal, Sandhini and Sastry, Girish and Askell, Amanda and Mishkin, Pamela and Clark, Jack and Krueger, Gretchen and Sutskever, Ilya},
  booktitle = 	 {Proceedings of the 38th International Conference on Machine Learning},
  pages = 	 {8748--8763},
  year = 	 {2021},
  editor = 	 {Meila, Marina and Zhang, Tong},
  volume = 	 {139},
  series = 	 {Proceedings of Machine Learning Research},
  month = 	 {18--24 Jul},
  publisher =    {PMLR}
}

@InProceedings{he2016deep,
author = {He, Kaiming and Zhang, Xiangyu and Ren, Shaoqing and Sun, Jian},
title = {Deep Residual Learning for Image Recognition},
booktitle = {Proceedings of the IEEE Conference on Computer Vision and Pattern Recognition (CVPR)},
month = {June},
year = {2016}
}

@inproceedings{
dosovitskiy2021an,
title={An Image is Worth 16x16 Words: Transformers for Image Recognition at Scale},
author={Alexey Dosovitskiy and Lucas Beyer and Alexander Kolesnikov and Dirk Weissenborn and Xiaohua Zhai and Thomas Unterthiner and Mostafa Dehghani and Matthias Minderer and Georg Heigold and Sylvain Gelly and Jakob Uszkoreit and Neil Houlsby},
booktitle={International Conference on Learning Representations},
year={2021}
}

@inproceedings{vaswani2017attention,
 author = {Vaswani, Ashish and Shazeer, Noam and Parmar, Niki and Uszkoreit, Jakob and Jones, Llion and Gomez, Aidan N and Kaiser, \L ukasz and Polosukhin, Illia},
 booktitle = {Advances in Neural Information Processing Systems},
 editor = {I. Guyon and U. Von Luxburg and S. Bengio and H. Wallach and R. Fergus and S. Vishwanathan and R. Garnett},
 pages = {},
 publisher = {Curran Associates, Inc.},
 title = {Attention is All you Need},
 volume = {30},
 year = {2017}
}

@article{zhou2022learning,
  title={Learning to prompt for vision-language models},
  author={Zhou, Kaiyang and Yang, Jingkang and Loy, Chen Change and Liu, Ziwei},
  journal={International Journal of Computer Vision},
  volume={130},
  number={9},
  pages={2337--2348},
  year={2022},
  publisher={Springer}
}

@InProceedings{hendrycks2022scaling,
  title = 	 {Scaling Out-of-Distribution Detection for Real-World Settings},
  author =       {Hendrycks, Dan and Basart, Steven and Mazeika, Mantas and Zou, Andy and Kwon, Joseph and Mostajabi, Mohammadreza and Steinhardt, Jacob and Song, Dawn},
  booktitle = 	 {Proceedings of the 39th International Conference on Machine Learning},
  pages = 	 {8759--8773},
  year = 	 {2022},
  editor = 	 {Chaudhuri, Kamalika and Jegelka, Stefanie and Song, Le and Szepesvari, Csaba and Niu, Gang and Sabato, Sivan},
  volume = 	 {162},
  series = 	 {Proceedings of Machine Learning Research},
  month = 	 {17--23 Jul},
  publisher =    {PMLR}
}

@inproceedings{liu2020energy,
 author = {Liu, Weitang and Wang, Xiaoyun and Owens, John and Li, Yixuan},
 booktitle = {Advances in Neural Information Processing Systems},
 editor = {H. Larochelle and M. Ranzato and R. Hadsell and M.F. Balcan and H. Lin},
 pages = {21464--21475},
 publisher = {Curran Associates, Inc.},
 title = {Energy-based Out-of-distribution Detection},
 volume = {33},
 year = {2020}
}

@inproceedings{ming2022delving,
 author = {Ming, Yifei and Cai, Ziyang and Gu, Jiuxiang and Sun, Yiyou and Li, Wei and Li, Yixuan},
 booktitle = {Advances in Neural Information Processing Systems},
 editor = {S. Koyejo and S. Mohamed and A. Agarwal and D. Belgrave and K. Cho and A. Oh},
 pages = {35087--35102},
 publisher = {Curran Associates, Inc.},
 title = {Delving into Out-of-Distribution Detection with Vision-Language Representations},
 volume = {35},
 year = {2022}
}

@InProceedings{zhou2022conditional,
    author    = {Zhou, Kaiyang and Yang, Jingkang and Loy, Chen Change and Liu, Ziwei},
    title     = {Conditional Prompt Learning for Vision-Language Models},
    booktitle = {Proceedings of the IEEE/CVF Conference on Computer Vision and Pattern Recognition (CVPR)},
    month     = {June},
    year      = {2022},
    pages     = {16816-16825}
}

@InProceedings{khattak2023maple,
    author    = {Khattak, Muhammad Uzair and Rasheed, Hanoona and Maaz, Muhammad and Khan, Salman and Khan, Fahad Shahbaz},
    title     = {MaPLe: Multi-Modal Prompt Learning},
    booktitle = {Proceedings of the IEEE/CVF Conference on Computer Vision and Pattern Recognition (CVPR)},
    month     = {June},
    year      = {2023},
    pages     = {19113-19122}
}

@inproceedings{miyai2023locoop,
 author = {Miyai, Atsuyuki and Yu, Qing and Irie, Go and Aizawa, Kiyoharu},
 booktitle = {Advances in Neural Information Processing Systems},
 editor = {A. Oh and T. Naumann and A. Globerson and K. Saenko and M. Hardt and S. Levine},
 pages = {76298--76310},
 publisher = {Curran Associates, Inc.},
 title = {LoCoOp: Few-Shot Out-of-Distribution Detection via Prompt Learning},
 volume = {36},
 year = {2023}
}

@InProceedings{yao2023visual,
    author    = {Yao, Hantao and Zhang, Rui and Xu, Changsheng},
    title     = {Visual-Language Prompt Tuning With Knowledge-Guided Context Optimization},
    booktitle = {Proceedings of the IEEE/CVF Conference on Computer Vision and Pattern Recognition (CVPR)},
    month     = {June},
    year      = {2023},
    pages     = {6757-6767}
}

@InProceedings{zhu2023prompt,
    author    = {Zhu, Beier and Niu, Yulei and Han, Yucheng and Wu, Yue and Zhang, Hanwang},
    title     = {Prompt-aligned Gradient for Prompt Tuning},
    booktitle = {Proceedings of the IEEE/CVF International Conference on Computer Vision (ICCV)},
    month     = {October},
    year      = {2023},
    pages     = {15659-15669}
}

@InProceedings{khattak2023self,
    author    = {Khattak, Muhammad Uzair and Wasim, Syed Talal and Naseer, Muzammal and Khan, Salman and Yang, Ming-Hsuan and Khan, Fahad Shahbaz},
    title     = {Self-regulating Prompts: Foundational Model Adaptation without Forgetting},
    booktitle = {Proceedings of the IEEE/CVF International Conference on Computer Vision (ICCV)},
    month     = {October},
    year      = {2023},
    pages     = {15190-15200}
}

@INPROCEEDINGS{li2004learning,
  author={Li Fei-Fei and Fergus, R. and Perona, P.},
  booktitle={2004 Conference on Computer Vision and Pattern Recognition Workshop}, 
  title={Learning Generative Visual Models from Few Training Examples: An Incremental Bayesian Approach Tested on 101 Object Categories}, 
  year={2004},
  volume={},
  number={},
  pages={178-178},
  keywords={Bayesian methods;Testing;Humans;Maximum likelihood estimation;Assembly;Shape;Machine vision;Image recognition;Parameter estimation;Image databases},
  doi={10.1109/CVPR.2004.383}}

@INPROCEEDINGS{nilsback2008automated,
  author={Nilsback, Maria-Elena and Zisserman, Andrew},
  booktitle={2008 Sixth Indian Conference on Computer Vision, Graphics \& Image Processing}, 
  title={Automated Flower Classification over a Large Number of Classes}, 
  year={2008},
  volume={},
  number={},
  pages={722-729},
  keywords={Shape;Kernel;Distributed computing;Support vector machines;Support vector machine classification;object classification;segmentation},
  doi={10.1109/ICVGIP.2008.47}}

@book{lax2007linear,
  title={Linear algebra and its applications},
  author={Lax, Peter D},
  volume={78},
  year={2007},
  publisher={John Wiley \& Sons}
}

@article{soomro2012ucf101,
  title={UCF101: A dataset of 101 human actions classes from videos in the wild},
  author={Soomro, Khurram and Zamir, Amir Roshan and Shah, Mubarak},
  journal={arXiv preprint arXiv:1212.0402},
  year={2012}
}

@INPROCEEDINGS{deng2009imagenet,
  author={Deng, Jia and Dong, Wei and Socher, Richard and Li, Li-Jia and Kai Li and Li Fei-Fei},
  booktitle={2009 IEEE Conference on Computer Vision and Pattern Recognition}, 
  title={ImageNet: A large-scale hierarchical image database}, 
  year={2009},
  volume={},
  number={},
  pages={248-255},
  keywords={Large-scale systems;Image databases;Explosions;Internet;Robustness;Information retrieval;Image retrieval;Multimedia databases;Ontologies;Spine},
  doi={10.1109/CVPR.2009.5206848}}

@INPROCEEDINGS{parkhi2012cats,
  author={Parkhi, Omkar M and Vedaldi, Andrea and Zisserman, Andrew and Jawahar, C. V.},
  booktitle={2012 IEEE Conference on Computer Vision and Pattern Recognition}, 
  title={Cats and dogs}, 
  year={2012},
  volume={},
  number={},
  pages={3498-3505},
  keywords={Positron emission tomography;Image segmentation;Cats;Dogs;Layout;Deformable models;Head},
  doi={10.1109/CVPR.2012.6248092}}

@InProceedings{kraus2320133d,
author = {Krause, Jonathan and Stark, Michael and Deng, Jia and Fei-Fei, Li},
title = {3D Object Representations for Fine-Grained Categorization},
booktitle = {Proceedings of the IEEE International Conference on Computer Vision (ICCV) Workshops},
month = {June},
year = {2013}
}

@InProceedings{bossard2014food,
author="Bossard, Lukas
and Guillaumin, Matthieu
and Van Gool, Luc",
editor="Fleet, David
and Pajdla, Tomas
and Schiele, Bernt
and Tuytelaars, Tinne",
title="Food-101 -- Mining Discriminative Components with Random Forests",
booktitle="Computer Vision -- ECCV 2014",
year="2014",
publisher="Springer International Publishing",
address="Cham",
pages="446--461",
abstract="In this paper we address the problem of automatically recognizing pictured dishes. To this end, we introduce a novel method to mine discriminative parts using Random Forests (rf), which allows us to mine for parts simultaneously for all classes and to share knowledge among them. To improve efficiency of mining and classification, we only consider patches that are aligned with image superpixels, which we call components. To measure the performance of our rf component mining for food recognition, we introduce a novel and challenging dataset of 101 food categories, with 101'000 images. With an average accuracy of 50.76{\%}, our model outperforms alternative classification methods except for cnn, including svm classification on Improved Fisher Vectors and existing discriminative part-mining algorithms by 11.88{\%} and 8.13{\%}, respectively. On the challenging mit-Indoor dataset, our method compares nicely to other s-o-a component-based classification methods.",
isbn="978-3-319-10599-4"
}

@article{maji2013fine,
  title={Fine-grained visual classification of aircraft},
  author={Maji, Subhransu and Rahtu, Esa and Kannala, Juho and Blaschko, Matthew and Vedaldi, Andrea},
  journal={arXiv preprint arXiv:1306.5151},
  year={2013}
}

@INPROCEEDINGS{xiao2010sun,
  author={Xiao, Jianxiong and Hays, James and Ehinger, Krista A. and Oliva, Aude and Torralba, Antonio},
  booktitle={2010 IEEE Computer Society Conference on Computer Vision and Pattern Recognition}, 
  title={SUN database: Large-scale scene recognition from abbey to zoo}, 
  year={2010},
  volume={},
  number={},
  pages={3485-3492},
  keywords={Sun;Large-scale systems;Layout;Humans;Image databases;Computer vision;Anthropometry;Bridges;Legged locomotion;Spatial databases},
  doi={10.1109/CVPR.2010.5539970}}

@InProceedings{climpoi2014describing,
author = {Cimpoi, Mircea and Maji, Subhransu and Kokkinos, Iasonas and Mohamed, Sammy and Vedaldi, Andrea},
title = {Describing Textures in the Wild},
booktitle = {Proceedings of the IEEE Conference on Computer Vision and Pattern Recognition (CVPR)},
month = {June},
year = {2014}
}

@ARTICLE{helber2019eurosat,
  author={Helber, Patrick and Bischke, Benjamin and Dengel, Andreas and Borth, Damian},
  journal={IEEE Journal of Selected Topics in Applied Earth Observations and Remote Sensing}, 
  title={EuroSAT: A Novel Dataset and Deep Learning Benchmark for Land Use and Land Cover Classification}, 
  year={2019},
  volume={12},
  number={7},
  pages={2217-2226},
  keywords={Satellites;Earth;Remote sensing;Machine learning;Spatial resolution;Feature extraction;Benchmark testing;Dataset;deep convolutional neural network;deep learning;earth observation;land cover classification;land use classification;machine learning;remote sensing;satellite image classification;satellite images},
  doi={10.1109/JSTARS.2019.2918242}}

@article{krizhevsky2009learning,
  title={Learning multiple layers of features from tiny images},
  author={Krizhevsky, Alex and Hinton, Geoffrey and others},
  journal={},
  year={2009}}

@InProceedings{Zhai_2023_ICCV,
    author    = {Zhai, Xiaohua and Mustafa, Basil and Kolesnikov, Alexander and Beyer, Lucas},
    title     = {Sigmoid Loss for Language Image Pre-Training},
    booktitle = {Proceedings of the IEEE/CVF International Conference on Computer Vision (ICCV)},
    month     = {October},
    year      = {2023},
    pages     = {11975-11986}
}

@article{ramesh2022hierarchical,
  title={Hierarchical text-conditional image generation with clip latents},
  author={Ramesh, Aditya and Dhariwal, Prafulla and Nichol, Alex and Chu, Casey and Chen, Mark},
  journal={arXiv preprint arXiv:2204.06125},
  volume={1},
  number={2},
  pages={3},
  year={2022}
}

@article{gu2021open,
  title={Open-vocabulary object detection via vision and language knowledge distillation},
  author={Gu, Xiuye and Lin, Tsung-Yi and Kuo, Weicheng and Cui, Yin},
  journal={arXiv preprint arXiv:2104.13921},
  year={2021}
}

@article{sun2023eva,
  title={Eva-clip: Improved training techniques for clip at scale},
  author={Sun, Quan and Fang, Yuxin and Wu, Ledell and Wang, Xinlong and Cao, Yue},
  journal={arXiv preprint arXiv:2303.15389},
  year={2023}
}

@InProceedings{Xu_2022_CVPR,
    author    = {Xu, Jiarui and De Mello, Shalini and Liu, Sifei and Byeon, Wonmin and Breuel, Thomas and Kautz, Jan and Wang, Xiaolong},
    title     = {GroupViT: Semantic Segmentation Emerges From Text Supervision},
    booktitle = {Proceedings of the IEEE/CVF Conference on Computer Vision and Pattern Recognition (CVPR)},
    month     = {June},
    year      = {2022},
    pages     = {18134-18144}
}

@InProceedings{Patashnik_2021_ICCV,
    author    = {Patashnik, Or and Wu, Zongze and Shechtman, Eli and Cohen-Or, Daniel and Lischinski, Dani},
    title     = {StyleCLIP: Text-Driven Manipulation of StyleGAN Imagery},
    booktitle = {Proceedings of the IEEE/CVF International Conference on Computer Vision (ICCV)},
    month     = {October},
    year      = {2021},
    pages     = {2085-2094}
}

@article{yu2022coca,
  title={Coca: Contrastive captioners are image-text foundation models},
  author={Yu, Jiahui and Wang, Zirui and Vasudevan, Vijay and Yeung, Legg and Seyedhosseini, Mojtaba and Wu, Yonghui},
  journal={arXiv preprint arXiv:2205.01917},
  year={2022}
}

@InProceedings{pmlr-v202-li23q,
  title = 	 {{BLIP}-2: Bootstrapping Language-Image Pre-training with Frozen Image Encoders and Large Language Models},
  author =       {Li, Junnan and Li, Dongxu and Savarese, Silvio and Hoi, Steven},
  booktitle = 	 {Proceedings of the 40th International Conference on Machine Learning},
  pages = 	 {19730--19742},
  year = 	 {2023},
  editor = 	 {Krause, Andreas and Brunskill, Emma and Cho, Kyunghyun and Engelhardt, Barbara and Sabato, Sivan and Scarlett, Jonathan},
  volume = 	 {202},
  series = 	 {Proceedings of Machine Learning Research},
  month = 	 {23--29 Jul},
  publisher =    {PMLR},
  pdf = 	 {https://proceedings.mlr.press/v202/li23q/li23q.pdf},
  url = 	 {https://proceedings.mlr.press/v202/li23q.html},
  abstract = 	 {The cost of vision-and-language pre-training has become increasingly prohibitive due to end-to-end training of large-scale models. This paper proposes BLIP-2, a generic and efficient pre-training strategy that bootstraps vision-language pre-training from off-the-shelf frozen pre-trained image encoders and frozen large language models. BLIP-2 bridges the modality gap with a lightweight Querying Transformer, which is pre-trained in two stages. The first stage bootstraps vision-language representation learning from a frozen image encoder. The second stage bootstraps vision-to-language generative learning from a frozen language model. BLIP-2 achieves state-of-the-art performance on various vision-language tasks, despite having significantly fewer trainable parameters than existing methods. For example, our model outperforms Flamingo80B by 8.7% on zero-shot VQAv2 with 54x fewer trainable parameters. We also demonstrate the model’s emerging capabilities of zero-shot image-to-text generation that can follow natural language instructions.}
}

@InProceedings{Cherti_2023_CVPR,
    author    = {Cherti, Mehdi and Beaumont, Romain and Wightman, Ross and Wortsman, Mitchell and Ilharco, Gabriel and Gordon, Cade and Schuhmann, Christoph and Schmidt, Ludwig and Jitsev, Jenia},
    title     = {Reproducible Scaling Laws for Contrastive Language-Image Learning},
    booktitle = {Proceedings of the IEEE/CVF Conference on Computer Vision and Pattern Recognition (CVPR)},
    month     = {June},
    year      = {2023},
    pages     = {2818-2829}
}

@inproceedings{yang2022openood,
 author = {Yang, Jingkang and Wang, Pengyun and Zou, Dejian and Zhou, Zitang and Ding, Kunyuan and PENG, WENXUAN and Wang, Haoqi and Chen, Guangyao and Li, Bo and Sun, Yiyou and Du, Xuefeng and Zhou, Kaiyang and Zhang, Wayne and Hendrycks, Dan and Li, Yixuan and Liu, Ziwei},
 booktitle = {Advances in Neural Information Processing Systems},
 editor = {S. Koyejo and S. Mohamed and A. Agarwal and D. Belgrave and K. Cho and A. Oh},
 pages = {32598--32611},
 publisher = {Curran Associates, Inc.},
 title = {OpenOOD: Benchmarking Generalized Out-of-Distribution Detection},
 volume = {35},
 year = {2022}
}

@article{zhang2023openood,
  title={OpenOOD v1.5: Enhanced Benchmark for Out-of-Distribution Detection},
  author={Zhang, Jingyang and Yang, Jingkang and Wang, Pengyun and Wang, Haoqi and Lin, Yueqian and Zhang, Haoran and Sun, Yiyou and Du, Xuefeng and Zhou, Kaiyang and Zhang, Wayne and Li, Yixuan and Liu, Ziwei and Chen, Yiran and Li, Hai},
  journal={arXiv preprint arXiv:2306.09301},
  year={2023}
}

@article{ren2021simple,
  title={A simple fix to mahalanobis distance for improving near-ood detection},
  author={Ren, Jie and Fort, Stanislav and Liu, Jeremiah and Roy, Abhijit Guha and Padhy, Shreyas and Lakshminarayanan, Balaji},
  journal={arXiv preprint arXiv:2106.09022},
  year={2021}
}

@inproceedings{fort2021exploring,
 author = {Fort, Stanislav and Ren, Jie and Lakshminarayanan, Balaji},
 booktitle = {Advances in Neural Information Processing Systems},
 editor = {M. Ranzato and A. Beygelzimer and Y. Dauphin and P.S. Liang and J. Wortman Vaughan},
 pages = {7068--7081},
 publisher = {Curran Associates, Inc.},
 title = {Exploring the Limits of Out-of-Distribution Detection},
 volume = {34},
 year = {2021}
}

@article{yang2021generalized,
  title={Generalized out-of-distribution detection: A survey},
  author={Yang, Jingkang and Zhou, Kaiyang and Li, Yixuan and Liu, Ziwei},
  journal={arXiv preprint arXiv:2110.11334},
  year={2021}
}

@inproceedings{lee2018simple,
 author = {Lee, Kimin and Lee, Kibok and Lee, Honglak and Shin, Jinwoo},
 booktitle = {Advances in Neural Information Processing Systems},
 editor = {S. Bengio and H. Wallach and H. Larochelle and K. Grauman and N. Cesa-Bianchi and R. Garnett},
 pages = {},
 publisher = {Curran Associates, Inc.},
 title = {A Simple Unified Framework for Detecting Out-of-Distribution Samples and Adversarial Attacks},
 volume = {31},
 year = {2018}
}

@inproceedings{
liang2018enhancing,
title={Enhancing The Reliability of Out-of-distribution Image Detection in Neural Networks},
author={Shiyu Liang and Yixuan Li and R. Srikant},
booktitle={International Conference on Learning Representations},
year={2018}
}

@InProceedings{jia2021scaling,
  title = 	 {Scaling Up Visual and Vision-Language Representation Learning With Noisy Text Supervision},
  author =       {Jia, Chao and Yang, Yinfei and Xia, Ye and Chen, Yi-Ting and Parekh, Zarana and Pham, Hieu and Le, Quoc and Sung, Yun-Hsuan and Li, Zhen and Duerig, Tom},
  booktitle = 	 {Proceedings of the 38th International Conference on Machine Learning},
  pages = 	 {4904--4916},
  year = 	 {2021},
  editor = 	 {Meila, Marina and Zhang, Tong},
  volume = 	 {139},
  series = 	 {Proceedings of Machine Learning Research},
  month = 	 {18--24 Jul},
  publisher =    {PMLR}
}

@InProceedings{wang2023clipn,
    author    = {Wang, Hualiang and Li, Yi and Yao, Huifeng and Li, Xiaomeng},
    title     = {CLIPN for Zero-Shot OOD Detection: Teaching CLIP to Say No},
    booktitle = {Proceedings of the IEEE/CVF International Conference on Computer Vision (ICCV)},
    month     = {October},
    year      = {2023},
    pages     = {1802-1812}
}

@article{jiang2024negative,
  title={Negative Label Guided OOD Detection with Pretrained Vision-Language Models},
  author={Jiang, Xue and Liu, Feng and Fang, Zhen and Chen, Hong and Liu, Tongliang and Zheng, Feng and Han, Bo},
  journal={arXiv preprint arXiv:2403.20078},
  year={2024}
}

@InProceedings{van2018the,
author = {Van Horn, Grant and Mac Aodha, Oisin and Song, Yang and Cui, Yin and Sun, Chen and Shepard, Alex and Adam, Hartwig and Perona, Pietro and Belongie, Serge},
title = {The INaturalist Species Classification and Detection Dataset},
booktitle = {Proceedings of the IEEE Conference on Computer Vision and Pattern Recognition (CVPR)},
month = {June},
year = {2018}
}

@ARTICLE{zhou2018places,
  author={Zhou, Bolei and Lapedriza, Agata and Khosla, Aditya and Oliva, Aude and Torralba, Antonio},
  journal={IEEE Transactions on Pattern Analysis and Machine Intelligence}, 
  title={Places: A 10 Million Image Database for Scene Recognition}, 
  year={2018},
  volume={40},
  number={6},
  pages={1452-1464},
  keywords={Object recognition;Deep learning;Image recognition;Leearning (artificial intelligence);Image classification;Image analysis;Scene classification;visual recognition;deep learning;deep feature;image dataset},
  doi={10.1109/TPAMI.2017.2723009}}

@InProceedings{karpathy2015deep,
author = {Karpathy, Andrej and Fei-Fei, Li},
title = {Deep Visual-Semantic Alignments for Generating Image Descriptions},
booktitle = {Proceedings of the IEEE Conference on Computer Vision and Pattern Recognition (CVPR)},
month = {June},
year = {2015}
}

@inproceedings{wang2016image,
author = {Wang, Cheng and Yang, Haojin and Bartz, Christian and Meinel, Christoph},
title = {Image Captioning with Deep Bidirectional LSTMs},
year = {2016},
isbn = {9781450336031},
publisher = {Association for Computing Machinery},
address = {New York, NY, USA},
doi = {10.1145/2964284.2964299},
booktitle = {Proceedings of the 24th ACM International Conference on Multimedia},
pages = {988–997},
numpages = {10},
keywords = {LSTM, deep learning, image captioning, visual-language},
location = {Amsterdam, The Netherlands},
series = {MM '16}
}

@InProceedings{you2016image,
author = {You, Quanzeng and Jin, Hailin and Wang, Zhaowen and Fang, Chen and Luo, Jiebo},
title = {Image Captioning With Semantic Attention},
booktitle = {Proceedings of the IEEE Conference on Computer Vision and Pattern Recognition (CVPR)},
month = {June},
year = {2016}
}

@article{oord2018representation,
  title={Representation learning with contrastive predictive coding},
  author={Oord, Aaron van den and Li, Yazhe and Vinyals, Oriol},
  journal={arXiv preprint arXiv:1807.03748},
  year={2018}
}

@InProceedings{zhai2022lit,
    author    = {Zhai, Xiaohua and Wang, Xiao and Mustafa, Basil and Steiner, Andreas and Keysers, Daniel and Kolesnikov, Alexander and Beyer, Lucas},
    title     = {LiT: Zero-Shot Transfer With Locked-Image Text Tuning},
    booktitle = {Proceedings of the IEEE/CVF Conference on Computer Vision and Pattern Recognition (CVPR)},
    month     = {June},
    year      = {2022},
    pages     = {18123-18133}
}

@article{yang2023full,
  title={Full-spectrum out-of-distribution detection},
  author={Yang, Jingkang and Zhou, Kaiyang and Liu, Ziwei},
  journal={International Journal of Computer Vision},
  volume={131},
  number={10},
  pages={2607--2622},
  year={2023},
  publisher={Springer}
}

@article{winkens2020contrastive,
  title={Contrastive training for improved out-of-distribution detection},
  author={Winkens, Jim and Bunel, Rudy and Roy, Abhijit Guha and Stanforth, Robert and Natarajan, Vivek and Ledsam, Joseph R and MacWilliams, Patricia and Kohli, Pushmeet and Karthikesalingam, Alan and Kohl, Simon and others},
  journal={arXiv preprint arXiv:2007.05566},
  year={2020}
}

@inproceedings{
hendrycks2017baseline,
title={A Baseline for Detecting Misclassified and Out-of-Distribution Examples in Neural Networks},
author={Dan Hendrycks and Kevin Gimpel},
booktitle={International Conference on Learning Representations},
year={2017},
}

@book{bishop2013pattern,
  title={Pattern Recognition and Machine Learning},
  author={Bishop, C.M.},
  isbn={9788132209065},
  series={Information science and statistics},
  year={2013},
  publisher={Springer (India) Private Limited}
}

@inproceedings{sun2021react,
 author = {Sun, Yiyou and Guo, Chuan and Li, Yixuan},
 booktitle = {Advances in Neural Information Processing Systems},
 editor = {M. Ranzato and A. Beygelzimer and Y. Dauphin and P.S. Liang and J. Wortman Vaughan},
 pages = {144--157},
 publisher = {Curran Associates, Inc.},
 title = {ReAct: Out-of-distribution Detection With Rectified Activations},
 volume = {34},
 year = {2021}
}

@InProceedings{Palechor2023large,
    author    = {Palechor, Andres and Bhoumik, Annesha and G\"unther, Manuel},
    title     = {Large-Scale Open-Set Classification Protocols for ImageNet},
    booktitle = {Proceedings of the IEEE/CVF Winter Conference on Applications of Computer Vision (WACV)},
    month     = {January},
    year      = {2023},
    pages     = {42-51}
}

@InProceedings{Li2024learning,
    author    = {Li, Tianqi and Pang, Guansong and Bai, Xiao and Miao, Wenjun and Zheng, Jin},
    title     = {Learning Transferable Negative Prompts for Out-of-Distribution Detection},
    booktitle = {Proceedings of the IEEE/CVF Conference on Computer Vision and Pattern Recognition (CVPR)},
    month     = {June},
    year      = {2024},
    pages     = {17584-17594}
}

@InProceedings{zhang2023mixture,
    author    = {Zhang, Jingyang and Inkawhich, Nathan and Linderman, Randolph and Chen, Yiran and Li, Hai},
    title     = {Mixture Outlier Exposure: Towards Out-of-Distribution Detection in Fine-Grained Environments},
    booktitle = {Proceedings of the IEEE/CVF Winter Conference on Applications of Computer Vision (WACV)},
    month     = {January},
    year      = {2023},
    pages     = {5531-5540}
}

@article{yang2024generalized,
  title={Generalized out-of-distribution detection: A survey},
  author={Yang, Jingkang and Zhou, Kaiyang and Li, Yixuan and Liu, Ziwei},
  journal={International Journal of Computer Vision},
  volume={132},
  number={12},
  pages={5635--5662},
  year={2024},
  publisher={Springer}
}

@article{buse1982likelihood,
  title={The likelihood ratio, Wald, and Lagrange multiplier tests: An expository note},
  author={Buse, Adolf},
  journal={The American Statistician},
  volume={36},
  number={3a},
  pages={153--157},
  year={1982},
  publisher={Taylor \& Francis}
}

@article{neyman1933ix,
  title={IX. On the problem of the most efficient tests of statistical hypotheses},
  author={Neyman, Jerzy and Pearson, Egon Sharpe},
  journal={Philosophical Transactions of the Royal Society of London. Series A, Containing Papers of a Mathematical or Physical Character},
  volume={231},
  number={694-706},
  pages={289--337},
  year={1933},
  publisher={The Royal Society London}
}

@inproceedings{charoenphakdee2021classification,
  title={Classification with rejection based on cost-sensitive classification},
  author={Charoenphakdee, Nontawat and Cui, Zhenghang and Zhang, Yivan and Sugiyama, Masashi},
  booktitle={International Conference on Machine Learning},
  pages={1507--1517},
  year={2021},
  organization={PMLR}
}

@article{hendrickx2024machine,
  title={Machine learning with a reject option: A survey},
  author={Hendrickx, Kilian and Perini, Lorenzo and Van der Plas, Dries and Meert, Wannes and Davis, Jesse},
  journal={Machine Learning},
  volume={113},
  number={5},
  pages={3073--3110},
  year={2024},
  publisher={Springer}
}

@article{chow1970optimum,
  title={On optimum recognition error and reject tradeoff},
  author={Chow, C},
  journal={IEEE Transactions on information theory},
  volume={16},
  number={1},
  pages={41--46},
  year={1970},
  publisher={IEEE}
}

@inproceedings{cortes2016learning,
  title={Learning with rejection},
  author={Cortes, Corinna and DeSalvo, Giulia and Mohri, Mehryar},
  booktitle={Algorithmic Learning Theory: 27th International Conference, ALT 2016, Bari, Italy, October 19-21, 2016, Proceedings 27},
  pages={67--82},
  year={2016},
  organization={Springer}
}

@article{ni2019calibration,
  title={On the calibration of multiclass classification with rejection},
  author={Ni, Chenri and Charoenphakdee, Nontawat and Honda, Junya and Sugiyama, Masashi},
  journal={Advances in Neural Information Processing Systems},
  volume={32},
  year={2019}
}

@article{rudd2017extreme,
  title={The extreme value machine},
  author={Rudd, Ethan M and Jain, Lalit P and Scheirer, Walter J and Boult, Terrance E},
  journal={IEEE transactions on pattern analysis and machine intelligence},
  volume={40},
  number={3},
  pages={762--768},
  year={2017},
  publisher={IEEE}
}

@article{geng2020recent,
  title={Recent advances in open set recognition: A survey},
  author={Geng, Chuanxing and Huang, Sheng-jun and Chen, Songcan},
  journal={IEEE transactions on pattern analysis and machine intelligence},
  volume={43},
  number={10},
  pages={3614--3631},
  year={2020},
  publisher={IEEE}
}

@InProceedings{Gustafsson2020evaluating,
author = {Gustafsson, Fredrik K. and Danelljan, Martin and Schon, Thomas B.},
title = {Evaluating Scalable Bayesian Deep Learning Methods for Robust Computer Vision},
booktitle = {Proceedings of the IEEE/CVF Conference on Computer Vision and Pattern Recognition (CVPR) Workshops},
month = {June},
year = {2020}
}

@inproceedings{malinin2018predictive,
author = {Malinin, Andrey and Gales, Mark},
title = {Predictive Uncertainty Estimation via Prior Networks},
year = {2018},
publisher = {Curran Associates Inc.},
address = {Red Hook, NY, USA},
booktitle = {Proceedings of the 32nd International Conference on Neural Information Processing Systems},
pages = {7047–7058},
numpages = {12},
location = {Montr\'{e}al, Canada},
series = {NIPS'18}
}

@article{senge2014reliable,
title = {Reliable classification: Learning classifiers that distinguish aleatoric and epistemic uncertainty},
journal = {Information Sciences},
volume = {255},
pages = {16-29},
year = {2014},
issn = {0020-0255},
doi = {https://doi.org/10.1016/j.ins.2013.07.030},
author = {Robin Senge and Stefan Bösner and Krzysztof Dembczyński and Jörg Haasenritter and Oliver Hirsch and Norbert Donner-Banzhoff and Eyke Hüllermeier},
keywords = {Machine learning, Classification, Uncertainty, Medical decision making, Medical diagnosis},
abstract = {A proper representation of the uncertainty involved in a prediction is an important prerequisite for the acceptance of machine learning and decision support technology in safety–critical application domains such as medical diagnosis. Despite the existence of various probabilistic approaches in these fields, there is arguably no method that is able to distinguish between two very different sources of uncertainty: aleatoric uncertainty, which is due to statistical variability and effects that are inherently random, and epistemic uncertainty which is caused by a lack of knowledge. In this paper, we propose a method for binary classification that does not only produce a prediction of the class of a query instance but also a quantification of the two aforementioned sources of uncertainty. Despite being grounded in probability and statistics, the method is formalized within the framework of fuzzy preference relations. The usefulness and reasonableness of our approach is confirmed on a suitable data set with information about patients suffering from chest pain.}
}

@inproceedings{nandy2020towards,
 author = {Nandy, Jay and Hsu, Wynne and Lee, Mong Li},
 booktitle = {Advances in Neural Information Processing Systems},
 editor = {H. Larochelle and M. Ranzato and R. Hadsell and M.F. Balcan and H. Lin},
 pages = {9239--9250},
 publisher = {Curran Associates, Inc.},
 title = {Towards Maximizing the Representation Gap between In-Domain \&amp; Out-of-Distribution Examples},
 volume = {33},
 year = {2020}
}

@article{zhang2024adaneg,
  title={AdaNeg: Adaptive Negative Proxy Guided OOD Detection with Vision-Language Models},
  author={Zhang, Yabin and Zhang, Lei},
  journal={Conference on Neural Information Processing Systems},
  year={2024}
}

@inproceedings{
djurisic2023extremely,
title={Extremely Simple Activation Shaping for Out-of-Distribution Detection},
author={Andrija Djurisic and Nebojsa Bozanic and Arjun Ashok and Rosanne Liu},
booktitle={The Eleventh International Conference on Learning Representations },
year={2023}
}

@inproceedings{
huang2021on,
title={On the Importance of Gradients for Detecting Distributional Shifts in the Wild},
author={Rui Huang and Andrew Geng and Yixuan Li},
booktitle={Advances in Neural Information Processing Systems},
editor={A. Beygelzimer and Y. Dauphin and P. Liang and J. Wortman Vaughan},
year={2021}
}

@InProceedings{pmlr-v162-sun22d,
  title = 	 {Out-of-Distribution Detection with Deep Nearest Neighbors},
  author =       {Sun, Yiyou and Ming, Yifei and Zhu, Xiaojin and Li, Yixuan},
  booktitle = 	 {Proceedings of the 39th International Conference on Machine Learning},
  pages = 	 {20827--20840},
  year = 	 {2022},
  editor = 	 {Chaudhuri, Kamalika and Jegelka, Stefanie and Song, Le and Szepesvari, Csaba and Niu, Gang and Sabato, Sivan},
  volume = 	 {162},
  series = 	 {Proceedings of Machine Learning Research},
  month = 	 {17--23 Jul},
  publisher =    {PMLR},
  abstract = 	 {Out-of-distribution (OOD) detection is a critical task for deploying machine learning models in the open world. Distance-based methods have demonstrated promise, where testing samples are detected as OOD if they are relatively far away from in-distribution (ID) data. However, prior methods impose a strong distributional assumption of the underlying feature space, which may not always hold. In this paper, we explore the efficacy of non-parametric nearest-neighbor distance for OOD detection, which has been largely overlooked in the literature. Unlike prior works, our method does not impose any distributional assumption, hence providing stronger flexibility and generality. We demonstrate the effectiveness of nearest-neighbor-based OOD detection on several benchmarks and establish superior performance. Under the same model trained on ImageNet-1k, our method substantially reduces the false positive rate (FPR@TPR95) by 24.77% compared to a strong baseline SSD+, which uses a parametric approach Mahalanobis distance in detection. Code is available: https://github.com/deeplearning-wisc/knn-ood.}
}

@InProceedings{Wang_2022_CVPR,
    author    = {Wang, Haoqi and Li, Zhizhong and Feng, Litong and Zhang, Wayne},
    title     = {ViM: Out-of-Distribution With Virtual-Logit Matching},
    booktitle = {Proceedings of the IEEE/CVF Conference on Computer Vision and Pattern Recognition (CVPR)},
    month     = {June},
    year      = {2022},
    pages     = {4921-4930}
}

\end{document}